\definecolor{salmon}{RGB}{234,153,153}
\definecolor{cornflowerblue}{RGB}{6,69,173}
\newcolumntype{Y}{>{\centering\arraybackslash}X}
\newcolumntype{Z}{>{\raggedleft\arraybackslash}X}
\theoremstyle{plain}
\newtheorem{theorem}{Theorem}[section]
\newtheorem{lemma}[theorem]{Lemma}
\theoremstyle{definition}
\newtheorem{definition}[theorem]{Definition}
\newtheorem{assumption}[theorem]{Assumption}
\theoremstyle{remark}
\newtheorem{condition}[theorem]{Condition}
\DeclarePairedDelimiter\abs{\lvert}{\rvert}
\DeclarePairedDelimiter\norm{\lVert}{\rVert}
\DeclareMathOperator*{\argmax}{arg\,max}
\DeclareMathOperator*{\argmin}{arg\,min}
\newcommand{\realNumber}{\mathbb{R}}
\newcommand{\naturalNumber}{\mathbb{N}}
\newcommand{\probP}{\mathds{P}}
\newcommand{\expectE}{\mathds{E}}
\newcommand{\indep}{\perp \!\!\! \perp}
\DeclarePairedDelimiterX{\expectarg}[1]{[}{]}{%
  \ifnum\currentgrouptype=16 \else\begingroup\fi
  \activatebar#1
  \ifnum\currentgrouptype=16 \else\endgroup\fi
}
\newcommand{\testD}{\probP_{\textrm{test}}}
\newcommand{\dataset}{\mathcal{D}}
\newcommand{\orthoM}{\psi}
\newcommand{\actions}{\mathcal{A}}
\newcommand{\policy}{\pi}
\newcommand{\Value}{V}
\let\oldding\ding
\renewcommand{\ding}[2][1]{\scalebox{#1}{\oldding{#2}}}
\title{\textbf{Learning Decision Policies with Instrumental Variables\\
through Double Machine Learning}}
\date{}
\author[1]{Daqian Shao}
\author[2]{Ashkan Soleymani}
\author[1]{Francesco Quinzan}
\author[1]{Marta Kwiatkowska}
\affil[1]{Department of Computer Science, University of Oxford}
\affil[2]{Department of Electrical Engineering and Computer Science, Massachusetts Institute of Technology}
\begin{document}

\maketitle

\begin{abstract}
A common issue in learning decision-making policies in data-rich settings is spurious correlations in the offline dataset, which can be caused by \textit{hidden confounders}. Instrumental variable (IV) regression, which utilises a key unconfounded variable known as the \textit{instrument}, is a standard technique for learning causal relationships between confounded \textit{action}, \textit{outcome}, and \textit{context} variables. Most recent IV regression algorithms use a two-stage approach, where a deep neural network (DNN) estimator learnt in the first stage is directly plugged into the second stage, in which another DNN is used to estimate the causal effect. Naively plugging the estimator can cause heavy bias in the second stage, especially when regularisation bias is present in the first stage estimator. We propose DML-IV, a non-linear IV regression method that reduces the bias in two-stage IV regressions and effectively learns high-performing policies. We derive a novel learning objective to reduce bias and design the DML-IV algorithm following the \textit{double/debiased machine learning (DML)} framework. The learnt DML-IV estimator has strong convergence rate and $O(N^{-1/2})$ suboptimality guarantees that match those when the dataset is unconfounded.
DML-IV outperforms state-of-the-art IV regression methods on IV regression benchmarks and learns high-performing policies in the presence of instruments.
\end{abstract}

\section{Introduction}

Recent advances in deep learning (DL) have greatly facilitated the learning of decision-making policies in data-rich settings, but they often lack optimality guarantees.
A common issue for learning from offline observational data is the existence of spurious correlations, which are relationships between variables that appear to be causal, but in fact are not. For example, suppose we have aeroplane ticket sales and pricing data in a ticket demand scenario~\cite{Hartford2017DeepPrediction}, and we wish to learn a policy from this offline data that maximises revenue. During holiday season, observational data may contain evidence of a concurrent surge in both ticket sales and prices, 
which may result in the learning algorithm to learn 
an incorrect policy that higher ticket prices will drive higher sales.

Spurious correlations are often caused by \textit{hidden confounders}~\cite{Pearl2000causality}, which are unobserved variables that influence both the \textit{actions} (or \textit{interventions}) and the \textit{outcome}. In the aeroplane ticket example, the occurrence of popular events and holidays serves as a hidden confounder that raises both ticket prices (actions) and sales (outcome). To properly account for these hidden confounders and understand the true causal effect of actions, we need to model the causal (or structural) relationship between the action and the outcome, which is expressed through a \textit{causal function}. However, learning the causal function in the presence of hidden confounders is known to be challenging and sometimes infeasible~\cite{Shpitser2008CompleteHierarchy}.

A popular approach to deal with hidden confounders is via \emph{instrumental variables} (IVs)~\cite{wright1928}, which are heterogeneous 
random variables that only affect the action, but not the outcome. These IVs have been used extensively to identify the causal effect of actions in many applications, including econometrics~\cite{Reiersol1945,Angrist2009}, drug testings~\cite{Angrist1996}, and social sciences~\cite{Angrist1990}. In the aeroplane ticket example, we can employ supply cost-shifters (e.g., fuel price) as instrumental variables, as their variations are
independent of the demand for aeroplane tickets and affect sales solely via ticket prices~\cite{Blundell2012}.

We focus on the problem of learning the causal function in the presence of hidden confounders using IVs (known as \textit{IV regression}), in order to learn a decision policy that maximises the expected outcome in this setting (which we refer to as the \textit{offline IV bandit} problem, described in~\cref{sec:offlineivbandit}) and comes with suboptimality guarantees. Two-stage least squares (2SLS)~\cite{Angrist1996} is a classical IV regression algorithm, which has been extended to non-linear settings that utilise machine learning (ML) techniques, including deep neural networks (DNNs), to learn the causal function.
The use of DNNs allows for greater flexibility in IV regression, as it does not impose strong assumptions on the functional form and can learn directly from data. However, regularisation is often employed to trade-off overfitting with the induced regularisation bias, especially for high-dimensional inputs. Both regularisation bias and overfitting may cause heavy bias~\cite{Chernozhukov2018Double/debiasedParameters} in estimating the causal function when the first stage estimator is naively plugged in, which causes slow convergence of the causal function estimator.

\textit{Double/Debiased Machine Learning}~\cite{Chernozhukov2018Double/debiasedParameters} (DML) is a statistical technique that provides an unbiased estimator with convergence rate guarantees for general two-stage regressions. DML relies on having a Neyman orthogonal~\cite{Neyman1965} score function to deal with regularisation bias, and uses cross-fitting, that is, an efficient form of (randomised) data splitting, to tackle overfitting bias. However, the use of DML for IV regression that utilises neural networks has not been explored.

In this work, we propose DML-IV, a novel IV regression algorithm that adopts the DML framework to provide an unbiased estimation of the causal function with fast convergence rate guarantees. We derive a novel Neyman orthogonal score for IV regression, and design a cross-fitting regime such that, under mild regularity conditions, our estimator is guaranteed to converge at the rate of $N^{-1/2}$, where $N$ is the sample size. We then extend DML-IV to solve the offline IV bandit problem, where we derive a policy from the DML-IV estimator and provide a $O(N^{-1/2})$ suboptimality bound with high probability that matches the suboptimality bounds of \textit{unconfounded} offline bandit algorithms~\cite{Jin2021,Nguyen-Tang2022}. Finally, we evaluate DML-IV on multiple benchmarks for IV regression and offline IV bandits, where superior results are demonstrated compared to state-of-the-art (SOTA) methods.

\vspace{+0.5em}
\noindent\textbf{Novel Contributions.}
\vspace{-0.5em}
\begin{itemize}
\itemsep-0.2em
\item We propose DML-IV, 
a novel IV regression algorithm that leverages the DML framework to provide unbiased estimation of the causal function.
\item We derive a novel, Neyman orthogonal, score function for IV regression, 
and design a cross-fitting regime for the DML-IV estimator to mitigate the bias.
\item We provide the first convergence rate guarantees for IV regression algorithms that use DL. Namely, we show that DML-IV converges at $N^{-1/2}$ rate leading to $O(N^{-1/2})$ suboptimality for the derived policy.
\item 
On a range of IV regression and offline IV bandit benchmarks, including two real-world datasets, we experimentally demonstrate that DML-IV outperforms other SOTA methods.
\end{itemize}

\subsection{Related Works}

\noindent\textbf{IV Regression.}
A number of approaches have been developed to extend the two-stage least squares (2SLS) algorithm~\cite{Angrist1996} to non-linear settings. A common approach is to use non-linear basis functions, such as Sieve IV~\cite{Newey2003,Blundell2007,Chen2018}, Kernel IV~\cite{Singh2019} and Dual IV~\cite{Muandet2020}. 
These methods enjoy theoretical benefits, but their flexibility is limited by the set of basis functions. 
More recently, DFIV~\cite{Xu2020} proposed to use basis functions parameterised by DNNs, which remove the restrictions on the functional form. Another approach is to perform stage 1 regression through conditional density estimation~\cite{Darolles2011}, where DeepIV~\cite{Hartford2017DeepPrediction} adopts DNNs to perform these regressions. DeepGMM~\cite{Bennett2019DeepAnalysis} is a DNN-based method that is inspired by the Generalised Method of Moments (GMM) to find a causal function that ensures the regression residual and the instrument are independent. The learning procedure of DeepGMM does not offer stability comparable to 2SLS approaches, as it is based on solving a smooth zero-sum game, similar to training Generative Adversarial Networks~\cite{Goodfellow2014}.
Our approach allows DNNs in both stages and compares favourably to Deep IV, DeepGMM, Kernel IV and DFIV.

\noindent\textbf{Double Machine Learning (DML).}
DML was originally proposed for semiparametric regression~\cite{Robinson1988}; it relies on the derivation of a score function, which describes the regression problem that is Neyman orthogonal~\cite{Neyman1965}. DML was later extended by adopting DNNs for generalised linear regressions~\cite{Chernozhukov2021AutomaticRegression}. Its strength is that it provides unbiased estimations for causal effects when the causal effect is identifiable~\cite{Jung2021} or there are no hidden confounders~\cite{Chernozhukov2022RieszNetForests}. DML offers strong ($N^{-1/2}$, where $N$ is the size of the dataset) guarantees on the convergence rate, even in the presence of high-dimensional input.

There are previous works on combining DML with IV regression, but they are mainly focused on linear and partially linear models. \citet{belloni2012sparse} propose a method to use Lasso and then Post-Lasso methods for the first stage estimation of linear IV to estimate the optimal instruments. To avoid selection biases, \citet{belloni2012sparse} leverage techniques from weak identification robust inference. In addition, \citet{chernozhukov2015post} propose a Neyman-orthogonalised score for the linear IV problem with control and instrument selection, to potentially be robust to regularisation and selection biases of Lasso as a model selection method. Neyman orthogonality for partially linear models with instruments was primarily discussed in the 
work of \citet{Chernozhukov2018Double/debiasedParameters}. Furthermore, DML techniques for identifying the local average treatment effects (LATE) for nonlinear models with a binary instrument and treatment (action) have been explored before \citep{chernozhukov2024applied}. For additional discussion, we refer to the book \citep{chernozhukov2024applied}.

DML for semiparametric models~\cite{Chernozhukov2022LocallyEstimation,Ichimura2022} has been previously applied to solve the nonparametric IV (NPIV) problem. However, their methods require that the average moment of the Neyman orthogonal score is affine (linear) in the nuisance parameters. Therefore, when applied to solve NPIV, functional assumptions regarding the IV set and the residual function were made. Such assumptions are not required in our work since we are considering a different problem setting and their Neyman orthogonal score is very different from ours. To the best of our knowledge, there is no work that adopts the DML framework for IV regression with DNNs.

\noindent\textbf{Causality.}
Doubly robust estimation for causality problems predominantly revolved around the estimation of average treatment effects (ATE)~\citep{robins1994estimation, funk2011doubly,benkeser2017doubly,bang2005doubly,sloczynski2018general}. Recently, there has been a surge in doubly robust identification of causal structures beyond the ATE settings. \citet{korth,quinzan2023drcfs} focus on finding direct causes of the target variable by orthogonalised scores. \citet{angelis2023doubly} extend this line for testing Granger causality in the time-series domain. In this work, we focus on doubly robust estimation of the counterfactual prediction function, a central problem in the field of causal inference, which could be of independent interest beyond the IV settings.

\noindent\textbf{Offline Bandit.}
Most bandit algorithms assume unconfoundedness (e.g., \citet{Nguyen-Tang2022,Subramanian2022CausalInterventions}). 
For bandit algorithms that consider hidden confounders, most of them work in the online setting, aiming to learn the best policy from scratch using the least amount of online interactions~\cite{Zhang2020DesigningApproach,Subramanian2022CausalInterventions}, or with the help of a pre-collected dataset~\cite{Lu2020RegretKnowledge}. Few works are dedicated to the offline confounded bandit, where only the offline dataset is provided, as it is essentially a causal inference problem. However, offline reinforcement learning (RL) with hidden confounders has been studied. \citet{Pace2023} develop a pessimistic algorithm based on the Delphic uncertainty
due to hidden confounders, while other methods adopt IV regression in combination with value iteration~\cite{Liao2021InstrumentalLearning} and Actor-Critic methods~\cite{Li2021} to learn policies in offline RL. Offline policy evaluation (OPE) under hidden confounders has also been studied. Using IVs, doubly robust estimators for policy values are derived through efficient influence functions~\cite{Xu2023} and marginalised importance sampling~\cite{Fu2022OfflineProcesses}. \citet{Bennett2021} solve OPE under an infinite-horizon ergodic MDP with hidden confounders using states and actions as proxies for the hidden confounders to identify policy values. \citet{Chen2021} consider the OPE problem in a standard unconfounded MDP, where they view the previous (action, state) pair as the instrument for the Bellman residual estimation problem of the current (action, state) pair and directly apply existing IV regression methods to estimate the Q value. We consider the setting of the offline confounded bandit with IVs, for which we leverage DML to obtain convergence and suboptimality guarantees.

\section{Preliminaries}
\subsection{Notation}
We use uppercase letters such as $C$ to denote random variables.
An observed realisation of $C$ is denoted by a lowercase letter $c$. We abbreviate $\expectE[R \lvert C=c]$, a realisation of the conditional expectation $\expectE[R \lvert C]$, as $\expectE[R \lvert c]$. $[N]$ denotes the set $\{1,...,N\}$ for $N\in\naturalNumber$. We write $\expectE[R\lvert do(A=a)]$ for the expectation of $R$ under \emph{do}  intervention~\cite{Pearl2000causality} of setting $A=a$. We use $\norm{\cdot}_p$ to denote the functional norm, defined as $\norm{f}_p\coloneqq\expectE[\abs{f(C)}^p]^{1/p}$, where the measure is implicit from the context. For a function $f$, we use $f_0$ to denote the true function and $\hat{f}$ an estimator of the true function. We use $O$ and $o$ to denote big-O and little-o notations~\cite{Weisstein2023} respectively.


\subsection{Contextual IV Setting}
We begin with a description of the contextual IV setting~\cite{Hartford2017DeepPrediction} that we use in this paper. We observe an \textit{action} $A\in\mathcal{A}\subseteq\realNumber^{d_A}$, a \textit{context} $C\in\mathcal{C}\subseteq\realNumber^{d_C}$, an \textit{instrumental variable (IV)}  $Z\in\mathcal{Z}\subseteq\realNumber^{d_Z}$ and an \textit{outcome} $R\in\realNumber$, where there exist \textit{unobserved confounders} that affect all of $A$, $C$ and $R$ through a hidden variable (or \textit{noise}) $\epsilon$. IV directly affects the action $A$, does not directly affect the outcome $R$ and is not correlated with the hidden confounder $\epsilon$. These causal relationships are illustrated in~\cref{fig:SCM} and are represented by the following structural causal model~\cite{Pearl2000causality}:
\begin{align}
    &R\coloneqq f_r(C,A)+\epsilon,\label{eq:reward}\quad\expectE[\epsilon]=0, \quad\expectE[\epsilon\lvert A,C]\neq0,
\end{align}
where $f_r$ is an unknown, continuous, and potentially non-linear causal function, and $\expectE[\epsilon\lvert A,C]$ is not necessarily zero. Denote the set of observations $(c_i,z_i,a_i,r_i)$, where $i\in[N]$, generated from this model as the \textit{offline dataset} $\dataset$. The goal of this paper is to learn the \textit{counterfactual prediction function}~\cite{Hartford2017DeepPrediction},
\begin{align*}
h_0(C,A):=f_r(C,A)+\expectE[\epsilon \lvert C]=\expectE[R\lvert do(A), C],
\end{align*}
which is the expected outcome under $do(A)$ intervention conditional on $C$, from the offline dataset $\dataset$. This task is also known as \textit{IV regression}, and we aim to estimate $h_0$ using a DNN. The term $\expectE[\epsilon \lvert C]$ is typically nonzero\footnote{In the setting where $\expectE[\epsilon\lvert C]=0$ is assumed~\cite{Bennett2019DeepAnalysis,Xu2020}, $h_0=f_r$ and all our results apply.}, but learning $h_0$ still allows us to compare between different actions when given a context as $h_0(C,a_1)-h_0(C,a_2)=f_r(C,a_1)-f_r(C,a_2)$ for all $a_1,a_2\in\actions$, and in particular, $\argmax_a h_0(C,a)=\argmax_a f_r(C,a)$.

Generally, $h_0$ is allowed to be infinite-dimensional, as commonly seen in nonparametric IV literature~\cite{Newey2003}. We also allow $h_0$ to be infinite-dimensional for the Neyman orthogonal score introduced in~\cref{sec:neyman}, but later, in~\cref{sec:dmliv}, we  restrict $h_0$ to be finite-dimensional and parameterised  to obtain the theoretical results of the convergence rate and the suboptimality bound of $O(N^{-1/2})$.

The challenge of learning $h_0$ from $\dataset$ is that $\expectE[\epsilon\lvert C,A]\neq 0$, which reflects the existence of hidden confounders that obscure the true causal effect. It has been shown~\cite{Bareinboim2012} that we cannot learn the causal effect of actions in the presence of hidden confounders without structural assumptions. Fortunately, IVs enable the identification of $h_0$ if the following
assumptions hold:
\begin{assumption}
(a) $\epsilon$ is additive to $R$ and $\expectE[\epsilon]=0$; (b) $Z \indep \epsilon \mid C$; and (c) $\probP(A\lvert C,Z)$ is not constant in $Z$.\label{assump:struc}
\end{assumption}
Intuitively,~\cref{assump:struc} (a) and (b), introduced by~\citet{Newey2003}, is known as the exclusion restriction, and requires that the instrument $Z$ is uncorrelated with the hidden confounder $\epsilon$. \cref{assump:struc} (c), known as the relevance condition, ensures that $Z$ induces variation in action and should be satisfied by the data generation policy. These assumptions are standard for the IV setting~\cite{Newey2003,Xu2020,Singh2019}, and allow for the minimal condition to identify the causal effect.

\begin{figure}[tb]
\centering
\includegraphics[width=0.5\textwidth]{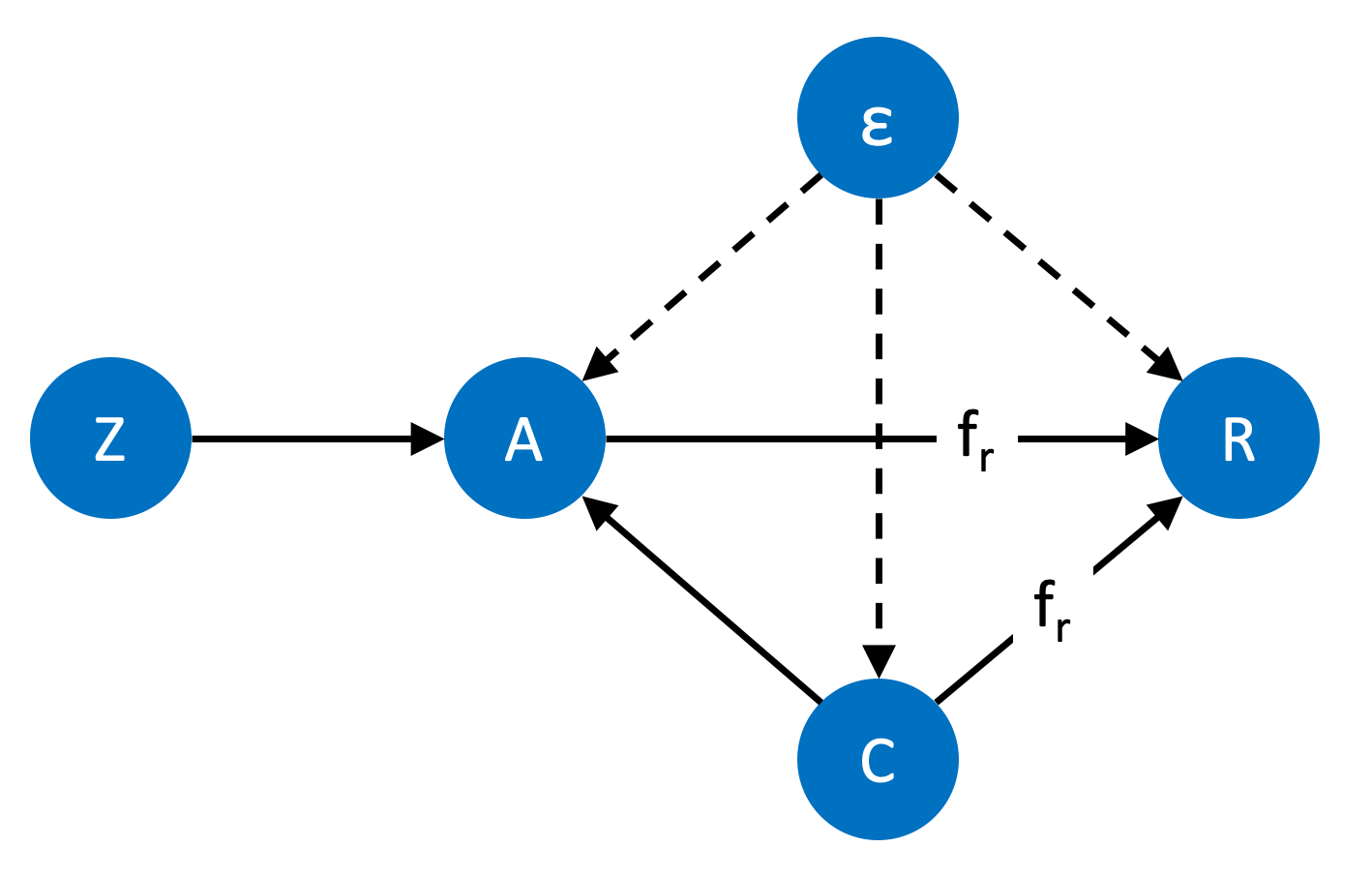}
    \caption{The causal graph of the contextual IV setting, where $R=f_r(C,A)+\epsilon$ and $Z$ is an \textit{instrumental variable} that affects $R$ only through $A$.}
    \label{fig:SCM}
\end{figure}

\subsection{Offline IV Bandit}\label{sec:offlineivbandit}

The learnt estimator of $h_0$ from the offline dataset $\dataset$ can be used to solve the \textit{offline bandit problem in the contextual IV setting}~\cite{Zhang2022}, that is, to identify a (deterministic) policy $\policy:\mathcal{C}\rightarrow\actions$ that maximises the value $\Value(\pi)\coloneqq\expectE_{c\sim\testD}[R\lvert do(A=\pi(c)),c]=\expectE_{c\sim\testD}[h_0(c,\pi(c))]$, which is the expected outcome when performing actions following $\policy$. $\testD$ is a test context distribution that can potentially differ from the distribution of $\dataset$. The optimal policy $\policy^*$ should satisfy $\Value(\policy^*)=\max_\policy \Value(\policy)$, and suboptimality is defined as $\textrm{subopt}(\policy)\coloneqq\Value(\policy^*)-\Value(\policy)$. We see that the optimal policy $\policy^*$ can be retrieved from $h_0$ by selecting $\policy^*(c)=\argmax_{a\in\mathcal{A}} h_0(c,a)$.


\subsection{Two-Stage IV Regression}
In order to identify $h_0$, a key observation~\cite{Newey2003} is that,  by taking the expectation on both sides of~\cref{eq:reward} conditional on $(C,Z)$, we have
\begin{align}
\expectE[R\lvert C,Z]&=\expectE\Big[f_r(C,A)+\expectE[\epsilon\lvert C]\Big\lvert C,Z\Big]\nonumber\\
&=\expectE[h_0(C,A)\lvert C,Z]\label{eq:h_exp}\\
&=\int h_0(C,A) \probP(A\lvert C,Z) dA,\nonumber
\end{align}
where the expectation $\expectE[R\lvert C,Z]$ and the distribution $\probP(A\lvert C,Z)$ are both observable. However, solving this equation analytically is ill-posed~\cite{Nashed1974}. This is an inverse problem for definite integrals that requires the derivation of a function inside the definite integral based on numerical integration values, which is thus not solvable analytically. Recent IV regression methods instead estimate $\hat{h}$ in some space of continuous functions $\mathcal{H}$ by solving the following optimisation problem with a two-stage approach:
\begin{equation}
    \min_{h\in\mathcal{H}} \expectE[(R-\expectE[h(C,A)\lvert C,Z])^2]\label{eq:optim}.
\end{equation}
In the first stage, the conditional expectation $\expectE[h(C,A)\lvert c,z]$ is learnt as a function of $(c,z)$ using observations, and in the second stage, the loss in~\cref{eq:optim} is minimised using the estimator obtained in stage 1. In both stages, linear regression or parametric ML methods, such as DNN, can be used to learn the true functions.

\subsection{Double Machine Learning}\label{sec:dml}

DML is a parameter estimation method that can mitigate certain biases in the learning process \citep{Chernozhukov2018Double/debiasedParameters,Chernozhukov2021AutomaticRegression,Chernozhukov2022RieszNetForests}, which has been extended to work with ML methods, including DL. DML considers the problem of estimating a function of interest $h$ as a solution to an equation of the form 
\begin{equation}
\label{eq:score_function}
    \expectE[\orthoM(\dataset;h, \eta) ] =0,
\end{equation}
where $\orthoM$ is referred to as a score function. Here, $\eta$ is a nuisance parameter, which is of no direct interest, but must be estimated to obtain $h$. DML provides a set of tools to derive an unbiased estimator of $h$ with convergence rate guarantees, even when the nuisance parameter $\eta$ suffers from regularisation, overfitting and other type of biases present in the training of ML models, which typically causes slow convergence when learning $h$.

In order to estimate $h$, DML reduces biases by using score functions $\orthoM$ that are Neyman orthogonal~\cite{Neyman1965} in $\eta$, which require the Gateaux derivative
\begin{align}
\label{eq:neyman}
\frac{\partial}{\partial r}\Big\lvert_{r=0} \expectE[\orthoM(\dataset;h_0,\eta_0+ r\eta)] = 0,
\end{align}
for all $\eta$. Here, $h_0$ and $\eta_0$ are the true parameters that minimise the expected score, that is, $\expectE[\orthoM(\dataset;h_0,\eta_0)] = 0$. Intuitively, the condition in~\cref{eq:neyman} is met if small changes of the nuisance parameter do not significantly affect the score function around the true parameter $h_0$. Neyman orthogonality is key in DML, as it allows fast convergence guarantees for $h$, even if the estimator for the nuisance parameter $\eta$ is biased. For score functions that are Neyman orthogonal, we define DML with \textit{K-fold cross-fitting} as follows.
\begin{definition}[DML, Definition 3.2~\citep{Chernozhukov2018Double/debiasedParameters}]\label{defn:dml}
Given a dataset $\mathcal{D}$ of $N$ observations, consider a score function $\orthoM$ as in~\cref{eq:score_function}, and suppose that $\orthoM$ is Neyman orthogonal that satisfies~\cref{eq:neyman}. Take a \textit{K-fold} random partition $\{I_k\}^K_{k=1}$ of observation indices $[N]$ each with size $n=N/K$, and let $\mathcal{D}_{I_k}$ be the set of observations $\{\mathcal{D}_i:i\in I_k\}$. Furthermore, define $I^c_k\coloneqq [N]\setminus I_k$ for each fold $k$, and construct estimators $\hat{\eta}_k$ of the nuisance parameter using $\mathcal{D}_{I^c_k}$. Then, construct an estimator $\hat{h}$ as a solution to the equation
\begin{align}
\label{eq:sol_model}
    \frac{1}{K}\sum_{k=1}^K \hat{\expectE}_{k}[\orthoM(\dataset_{I_k};\hat{h},\hat{\eta}_k)]=0,
\end{align}
where $\hat{\expectE}_k$ is the empirical expectation over $\mathcal{D}_{I_k}$.
\end{definition}
In the definition above, $\hat{h}$ is defined as a solution to~\cref{eq:sol_model}. In practice, however, finding an exact solution may not be feasible. To circumvent this problem, we can also define the estimator of interest $\hat{h}$ as an $\epsilon_N$-approximate solution to~\cref{eq:sol_model}, where $\epsilon_N=O(N^{-1/2})$, which allows for a small optimisation error.

\section{DML-IV Algorithm}\label{sec:3}
We now present the main contributions of this paper.
The key to our results is the DML-IV algorithm, a novel two-stage IV regression algorithm utilising DNNs in both stages that provides guarantees on the convergence rate by leveraging the DML framework (see~\cref{sec:dml}).  
The DML-IV estimator is then utilised to solve an offline IV bandit (see~\cref{sec:offlineivbandit}) by retrieving a deterministic policy with suboptimality guarantees that match those of the uncounfounded bandit.

Firstly, we remark that, in order to
estimate the counterfactual prediction function $h_0$ with convergence rate guarantees, we need a Neyman orthogonal score.
We let $g_0(h,c,z)\coloneqq\expectE[h(C,A)\lvert c,z]$ and let $\mathcal{G}$ to be some function space that includes $g_0$ and its potential estimators $\hat{g}$. Unfortunately, the standard score (or loss) function for two-stage IV regression $\ell=(R-g(h,c,z))^2$ in~\cref{eq:optim} is not Neyman orthogonal (details in~\cref{appen:score}), which means that small misspecifications or bias on $g$ may lead to significant changes to this loss function, and there are no guarantees on the convergence rate if the first stage estimator $\hat{g}$ is naively plugged into the loss to estimate $h_0$. 
To address this, we first derive a novel Neyman orthogonal score function for the IV regression problem and then design a DML algorithm with K-fold cross-fitting adapted to the IV regression problem.


\subsection{Neyman Orthogonal Score}\label{sec:neyman}
We first derive a novel Neyman orthogonal score for learning $h_0$ in the contextual IV setting. The key to constructing a Neyman orthogonal score usually involves estimating additional nuisance parameters~\cite{Chernozhukov2018Double/debiasedParameters} and adding terms to the original score function to debias it, so we first select relevant quantities that should be estimated as nuisance parameters. Following two-stage IV regression approaches~\cite{Hartford2017DeepPrediction}, estimating $g_0$ is essential for identifying $h_0$, so we will estimate it as a nuisance parameter. We found that, by additionally estimating $s_0(c,z)\coloneqq\expectE[R\lvert c,z]$ inside some function space $\mathcal{S}$, we can construct a new score function
\begin{equation}
\orthoM(\dataset;h,(s,g))=(s(c,z)-g(h,c,z))^2,\label{eq:neyman_score}
\end{equation}
by replacing $R$ in the standard score with $s(c,z)$. Here, the nuisance parameters are $\eta=(s,g)$. We see that $\orthoM$ is a valid score function since $\expectE[\orthoM(\dataset;h_0,(s_0,g_0))]=0$ with the true functions $(s_0,g_0)$ by~\cref{eq:h_exp}, and the next theorem shows that our score function is in fact Neyman orthogonal by checking its Gateaux derivative vanishes at $(h_0,(s_0,g_0))$, where the proof is deferred to~\cref{appen:neyman}.
\begin{theorem}\label{thm:neyman}
The score function $\orthoM(\dataset;h,(s,g))=(s(c,z)-g(h,c,z))^2$ obeys the Neyman orthogonality conditions at $(h_0,(s_0,g_0))$.
\end{theorem}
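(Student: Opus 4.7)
The plan is to verify the Neyman orthogonality condition directly from its definition in~\cref{eq:neyman}, by computing the Gateaux derivative of the expected score at the true nuisance pair $\eta_0=(s_0,g_0)$ and showing it vanishes in every admissible direction $\eta=(\Delta s,\Delta g)\in\mathcal{S}\times\mathcal{G}$. Since only the nuisance parameter is perturbed (the target $h$ stays fixed at $h_0$), the function $g_0(h_0,\cdot,\cdot)$ may be treated as an ordinary function of $(c,z)$, which reduces the computation to a one-dimensional differentiation of a quadratic.

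First I would write out the perturbed score:
\begin{equation*}
\orthoM(\dataset;h_0,\eta_0+r\eta)=\bigl(s_0(c,z)+r\Delta s(c,z)-g_0(h_0,c,z)-r\Delta g(h_0,c,z)\bigr)^2.
\end{equation*}
Differentiating in $r$ and evaluating at $r=0$ gives
\begin{equation*}
\frac{\partial}{\partial r}\Big|_{r=0}\expectE[\orthoM(\dataset;h_0,\eta_0+r\eta)]=2\,\expectE\bigl[(s_0(C,Z)-g_0(h_0,C,Z))(\Delta s(C,Z)-\Delta g(h_0,C,Z))\bigr],
\end{equation*}
assuming standard regularity so that derivative and expectation can be exchanged (square-integrability of all involved objects over the support of $(C,Z)$, which I would state as a mild assumption on $\mathcal{S}$ and $\mathcal{G}$).

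The final step is to observe that the first factor inside this expectation is identically zero. By definition $s_0(c,z)=\expectE[R\mid c,z]$, and by~\cref{eq:h_exp} the true counterfactual prediction function satisfies $g_0(h_0,c,z)=\expectE[h_0(C,A)\mid c,z]=\expectE[R\mid c,z]$. Hence $s_0(c,z)-g_0(h_0,c,z)=0$ almost surely, and the Gateaux derivative vanishes for every direction $(\Delta s,\Delta g)$, establishing Neyman orthogonality at $(h_0,(s_0,g_0))$.

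The calculation is essentially immediate; the only subtlety worth flagging is a conceptual one rather than a technical obstacle, namely that the score has \emph{two} nuisance components and orthogonality must hold jointly, not just for each coordinate separately. Because the Gateaux derivative is linear in $\eta$, however, the joint perturbation splits into contributions involving $\Delta s$ and $\Delta g(h_0,\cdot,\cdot)$ that are both multiplied by the common residual $s_0-g_0(h_0,\cdot,\cdot)$, so the vanishing of this residual simultaneously kills both contributions. No further machinery is required.
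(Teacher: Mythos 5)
Your proposal is correct and follows essentially the same route as the paper's proof: expand the quadratic under a joint linear perturbation of $(s_0,g_0)$, differentiate at $r=0$, and observe that the residual $s_0(C,Z)-g_0(h_0,C,Z)$ vanishes identically by~\cref{eq:h_exp}, which kills the Gateaux derivative in every direction. The remarks on exchanging derivative and expectation and on joint orthogonality are sensible but add nothing beyond what the paper's argument already implicitly uses.
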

This Neyman orthogonal score function is abstract, in the sense that it allows for general estimation methods for $g_0$ and $s_0$, as long as they satisfy certain convergence conditions, which are introduced in the next section.

\subsection{Learning Causal Effects through DML}\label{sec:dmliv}

\begin{algorithm}[tb]
   \caption{DML-IV with K-fold cross-fitting}
   \label{alg:dml-iv-kf}
\begin{algorithmic}
   \STATE {\bfseries Input:} Dataset $\dataset$ of size $N$, number of folds $K$ for cross-fitting, mini-batch size $n_b$
    \STATE {\bfseries Output:} The DML-IV estimator $h_{\hat{\theta}}$
   \STATE Get a partition $(I_k)^K_{k=1}$ of dataset indices $[N]$
   \FOR{$k=1$ {\bfseries to} $K$}
   \STATE $I^c_k\coloneqq[N]\setminus I_k$
   \STATE Learn $\hat{s}_k$ and $\hat{g}_k$ using $\{(\dataset_i):{i\in I^c_k}\}$
   \ENDFOR
   \STATE Initialise $h_{\hat{\theta}}$
   \REPEAT
   \FOR{$k=1$ {\bfseries to} $K$}
   \STATE Sample $n_b$ data $(c_i^k,z_i^k)$ from $\{(\dataset_i):{i\in I_k}\}$
   \STATE $\mathcal{L}=\hat{\expectE}_{(c_i^k,z_i^k)}\left[(\hat{s}_k(c,z)-\hat{g}_k(h_\theta,c,z))^2\right]$
   \STATE Update $\hat{\theta}$ to minimise loss $\mathcal{L}$ 
   \ENDFOR
    \UNTIL{convergence}
\end{algorithmic}
\end{algorithm}

With the Neyman orthogonal score, we now introduce DML-IV. While the DML-IV algorithm does not require any assumptions on $h$, we assume that $h$ is finite-dimensional and parameterised for the theoretical analysis of DML-IV. Let $h_0=h_{\theta_0}$ and $\Theta\subseteq \realNumber^{d_\theta}$ be a compact space of parameters of $h$, where the true parameter $\theta_0\in\Theta$ is in the interior of $\Theta$, and $\mathcal{H}\coloneqq\{h_\theta:\theta\in\Theta\}$ is the function space of $h$. The procedure of the DML-IV algorithm for estimating $h_0$ is described in~\cref{alg:dml-iv-kf}. Given a dataset $\dataset$ of size $N$, we split the dataset using a random partition $\{I_k\}^K_{k=1}$ of dataset indices $[N]$ such that the size of each fold $I_k$ is $N/K$. 

In the first stage of DML-IV, for each fold $k\in [K]$, we learn $\hat{s}_k$ and $\hat{g}_k$ using data $\dataset_{I^c_k}$ with indices $I^c_k\coloneqq[N]\setminus I_k$. $\hat{s}_k\approx\expectE[R\lvert C,Z]$ can be learnt through standard supervised learning using a neural network with inputs $(C,Z)$ and label $R$. For $\hat{g}_k$, we follow~\cite{Hartford2017DeepPrediction} to estimate $F_0(A\lvert C,Z)$, the conditional distribution of $A$ given $(C,Z)$, with $\hat{F}$, and then estimate $\hat{g}$ via
\begin{equation*}
\hat{g}(h,c,z)=\sum_{\dot{A}\sim \hat{F}(A\lvert C,Z)} h(C,\dot{A})\approx \int h(C,A)\hat{F}(A\lvert C,Z) dA\approx\expectE[h(C,A)\lvert c,z].
\end{equation*}
If the action space is discrete, $\hat{F}$ is a categorical model, e.g., a DNN with softmax output. For a continuous action space, a mixture of Gaussian models is adopted to estimate the distribution $F_0(A\lvert C,Z)$, where a DNN is used to predict the means and standard deviations of the Gaussian distributions.

In the second stage of DML-IV, we estimate $\hat{\theta}$ using our Neyman orthogonal score function $\orthoM$ in~\cref{eq:neyman_score}. The key here is to optimise $\hat{\theta}$ with data from the $k$-th fold using nuisance parameters $\hat{s}_k$, $\hat{g}_k$ that are trained with data $\dataset_{I^c_k}$, the complement of the data from the $k$-th fold. This is important to fully debias the estimator $\hat{\theta}$. We alternate between the $K$ folds while sampling a mini-batch $(c_i^k,z_i^k)$ of size $n_b$ from each fold $k$ of the dataset to update $\hat{\theta}$ by minimising the empirical loss on the mini-batch following our Neyman orthogonal score $\orthoM$,
\begin{equation*}
\hat{\expectE}_{(c_i^k,z_i^k)} \left[(\hat{s}_k(c,z)-\hat{g}_k(h_\theta,c,z))^2\right] = \sum_{ (c_i^k,z_i^k)}\frac{1}{n_b}\left((\hat{s}_k(c,z)-\hat{g}_k(h_\theta,c,z))^2\right ).
\end{equation*}
When the second stage converges, we return the DML-IV estimator $h_{\hat{\theta}}$.

To obtain the DML convergence rate guarantees~\cite{Chernozhukov2018Double/debiasedParameters} for $h_{\hat{\theta}}$, i.e., for $\hat{\theta}$ to converge to the true parameters $\theta_0$ at the rate of $O(N^{-1/2})$ with high probability, there are two key conditions: i) Neyman orthogonality of the score function, and ii) the nuisance parameters should converge to their true values at the crude rate of $o(N^{-1/4})$. The Neyman orthogonal score is given in~\cref{thm:neyman}, so it remains to prove the convergence rate of the nuisance parameters. Define $\mathcal{G}_N$ to be the \textit{realisation set} such that $\hat{g}_N$, the estimator of $g_0$ using a dataset of size $N$, takes values in this set. Similarly, define $\mathcal{S}_N$ to be the \textit{realisation set} of $\hat{s}_N$. These realisation sets are properly shrinking neighbourhoods of the true functions $g_0$ and $s_0$, and we later provide~\cref{lemma:informal_nuisances} that describes the rate of shrinkage of these realisation sets, for which we require boundedness of functions $g, s, h$ and the outcome variable $R$ as stated in~\cref{assump:dml}.
\begin{assumption}\label{assump:dml}
We assume that (a): $g_0, s_0, h_0\in\mathcal{G},\mathcal{S},\mathcal{H}$ are all bounded i.e.,\\$\norm{g_0}_\infty, \norm{s_0}_\infty, \norm{h_0}_\infty \leq B$; and (b): the outcome $\norm{R}_\infty\leq B$, where $B\in\realNumber^+$.
\end{assumption}
To improve readability, we provide here an informal statement of the lemma, which expresses the relationship between the critical radius~\cite{Wainwright2019,bartlett2005local} of the realisation sets and the convergence rate of the nuisance parameters. We defer the formal statement and the proof to~\cref{appen:dml}.

\begin{lemma}[Informal: nuisance parameters convergence.]\footnote{See~\cref{lemma:nuisances} for the formal statement.}\label{lemma:informal_nuisances}
If~\cref{assump:dml} holds, let $\delta_N$ be an upper bound on the critical radius of the function spaces related to the realisation sets $\mathcal{S}_N$ and $\mathcal{G}_N$. Then, with probability $1-\zeta$:
\begin{align*}
    \norm{\hat{s}-s_0}_2^2&=O\left(\delta_N^2+\frac{\ln(1/\zeta)}{N}\right);\\
    \norm{\hat{g}-g_0}_2^2&=O\left(\delta_N^2+\frac{\ln(1/\zeta)}{N}\right).
\end{align*}
\end{lemma}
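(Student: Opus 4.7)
The plan is to establish both bounds by viewing $\hat{s}$ and $\hat{g}$ as empirical risk minimisers over the realisation sets $\mathcal{S}_N$ and $\mathcal{G}_N$, and then invoking standard localised empirical process theory \citep{Wainwright2019,bartlett2005local}. The template I would apply is the following: for an empirical risk minimiser of a squared loss over a uniformly bounded function class that contains the truth, the $L^2$ excess risk is controlled by the \emph{critical radius} of the localised Rademacher complexity of the star-hull of the class (shifted by the truth), plus a deviation term of order $\log(1/\zeta)/N$ coming from a Talagrand-type concentration inequality. Both bounds in the lemma are instances of this template.

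For $\hat{s}$, I would first note that $R$, $s_0$, and any candidate $s\in\mathcal{S}_N$ are uniformly bounded by $B$ via \cref{assump:dml}. Writing $R = s_0(C,Z) + \xi$ with $\expectE[\xi \lvert C,Z]=0$, the estimator $\hat{s}$ minimises the empirical squared loss $\hat{\expectE}[(R - s(C,Z))^2]$. The excess squared loss equals $\norm{s - s_0}_2^2$ (the Bernstein curvature condition), and the class is uniformly bounded, so a direct application of the localised-Rademacher bound yields, with probability at least $1 - \zeta$, the desired inequality $\norm{\hat{s} - s_0}_2^2 \lesssim \delta_N^2 + \log(1/\zeta)/N$, whenever $\delta_N$ upper-bounds the critical radius of the star-hull of $\{s - s_0 : s \in \mathcal{S}_N\}$.

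For $\hat{g}$, the same template applies, but one must account for the fact that $g(h,c,z)$ is indexed by $h\in\mathcal{H}$. The conditional-law estimator $\hat{F}(\cdot \lvert c,z)$ is obtained by minimising an empirical likelihood (or squared) loss over its parametric class, and the induced map $\hat{g}(h,c,z) = \int h(c,a)\,\hat{F}(da \lvert c,z)$ inhabits $\mathcal{G}_N$. Since $\norm{h}_\infty \leq B$ by \cref{assump:dml}, the functional $F \mapsto g(h,\cdot)$ is linear and bounded uniformly in $h$, so the Bernstein curvature for the induced squared error between $\hat{g}$ and $g_0$ transfers with a constant depending only on $B$. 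A second localised-Rademacher argument over the shifted class $\{g(h,\cdot) - g_0(h,\cdot) : g \in \mathcal{G}_N,\, h \in \mathcal{H}\}$ then gives $\norm{\hat{g} - g_0}_2^2 \lesssim \delta_N^2 + \log(1/\zeta)/N$, and a union bound over the two applications of concentration absorbs the factor of two into the constants.

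The hard part will be the second bound: pinning down the right norm on $g$ and identifying the empirical process so that the associated critical radius is indeed of the claimed order $\delta_N$. The boundedness postulated in \cref{assump:dml} is precisely what reduces both problems to bounded-function-class ERMs fitting the framework of \citet{Wainwright2019}, and simultaneously it controls the Lipschitz-in-$h$ behaviour of the integral functional that defines $\hat{g}$, preventing any blow-up when we take the bound uniformly in $\mathcal{H}$. Once this reduction is in place, the remaining ingredients — a standard peeling argument to control deviations at radii above $\delta_N$ and the final union bound — are routine and only contribute constants to the stated $O(\delta_N^2+\log(1/\zeta)/N)$ rate.
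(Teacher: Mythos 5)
Your proposal is sound and lands on the same underlying machinery as the paper — localised complexity bounds for bounded empirical risk minimisers — but the execution differs in a way worth noting. The paper does not run the peeling/Talagrand argument itself: it casts each nuisance regression as the minimisation of a population loss of the form $\expectE[-2m(\dataset,\alpha)+\alpha(x)^2]$, verifies the mean-square continuity condition $\expectE[(m(\dataset,\alpha)-m(\dataset,\alpha'))^2]\leq B^2\norm{\alpha-\alpha'}_2^2$ via H\"older's inequality and the boundedness in \cref{assump:dml} (with $m(\dataset,s)=Rs(C,Z)$ for the first bound and $m(\dataset,g)=g_0(C,Z,h_0)\cdot g(C,Z,h_0)$ for the second), and then invokes Theorem~1 of \citet{Chernozhukov2021AutomaticRegression} as a black box; that theorem is itself the localised-Rademacher ERM bound you propose to re-derive, so at that level the two routes coincide. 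The substantive divergence is in the $\hat{g}$ bound. The paper works at the \emph{fixed} $h_0$ — the relevant class is $\{\gamma(g(C,Z,h_0)-g_0(C,Z,h_0))\}$ — and treats $g$ as directly least-squares fit to the target $g_0(h_0,\cdot)$, which keeps the critical radius small and makes the second bound formally identical to the first; your version takes the class uniformly over $h\in\mathcal{H}$, which is stronger than the lemma requires and needlessly inflates $\delta_N$. Conversely, you route the argument through the conditional-density estimator $\hat{F}$ and the linear functional $F\mapsto\int h\,dF$, which is more faithful to what \cref{alg:dml-iv-kf} actually computes, but you correctly flag that transferring the error of $\hat{F}$ into $\norm{\hat{g}-g_0}_2$ is the unresolved step — the paper sidesteps this entirely by its choice of moment functional, at the cost of positing a population loss that is not directly computable from data. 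Finally, note that the formal version of the lemma retains an approximation-error term $\inf_{s_*\in\mathcal{S}_N}\norm{s_*-s_0}_2^2$ (and likewise for $g$), which your well-specification assumption suppresses; this is consistent with the informal statement but should be carried along if you want to match \cref{lemma:nuisances}.
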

The critical radius is a quantity that describes the complexity of estimation, and it is typically shown that $\delta_N=O(d_N N^{-1/2})$~\cite{Chernozhukov2022RieszNetForests,Chernozhukov2021AutomaticRegression}, where $d_N$ is the effective dimension of the hypothesis space (see~\cref{appen:critical_radius} for the derivation and formal definitions). This, together with~\cref{lemma:informal_nuisances}, implies that $\norm{\hat{s}-s_0}_2=O(d_N N^{-1/2})$. Therefore, for function classes with $d_N=o(N^{1/4})$, $\norm{\hat{s}-s_0}_2\leq o(N^{-1/4})$ (and similarly for $\hat{g}$). This is a broad class of functions that covers many machine learning methods such as deep ReLU networks and shallow regression trees~\cite{Chernozhukov2021AutomaticRegression}. It has also been shown that conditional density and expectation estimation used for $\hat{g}$ satisfies $d_N=o(N^{1/4})$ under mild assumptions~\cite{Grunewalder2018,Bilodeau2021}. We refer to \citet{Chernozhukov2021AutomaticRegression} for additional discussion and concrete convergence rates of nuisance estimators.


\cref{lemma:informal_nuisances} shows that the nuisance parameters converge to their true values at the rate of $o(N^{-1/4})$ if $d_N=o(N^{1/4})$, thus satisfying the second key condition to get the DML convergence rate guarantees. This allows us, after checking some mild regularity and continuity conditions, to obtain the following theorem regarding the convergence of the DML-IV estimator by applying Theorem 3.3 of~\citet{Chernozhukov2018Double/debiasedParameters}, with proof deferred to~\cref{appen:dml}.


\begin{theorem}[Convergence of the DML-IV estimator]\label{thm:dml}
If the effective dimension $d_N=o(N^{1/4})$ for $\hat{s}$, $\hat{g}$, and Assumption~\ref{assump:struc}, \& \ref{assump:dml} hold, we have that the DML-IV estimator $\hat{\theta}$ is concentrated in a $1/\sqrt{N}$ neighbourhood of $\theta_0$, and is approximately linear and centred Gaussian:
\begin{align*}
    \sqrt{N}(\hat{\theta}-\theta_0)\rightarrow \mathcal{N}(0,\sigma^2) \text{ in distribution},
\end{align*}
where the estimator variance is given by
\begin{equation*}
\sigma^2 \coloneqq J_0^{-1}\expectE[\orthoM(\dataset,\theta_0,\eta_0)\orthoM(\dataset,\theta_0,\eta_0)^T](J_0^{-1})^T,
\end{equation*}
which is constant w.r.t $N$ and $J_0$ denotes the Jacobian matrix of $\expectE[\orthoM]$ w.r.t $\theta$.
\end{theorem}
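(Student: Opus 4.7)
The plan is to invoke Theorem 3.3 of \citet{Chernozhukov2018Double/debiasedParameters}, the master asymptotic-normality result for DML estimators with $K$-fold cross-fitting, and specialise it to our Neyman orthogonal score $\orthoM(\dataset;h,(s,g))=(s(c,z)-g(h,c,z))^2$. That result has three substantive hypotheses: (i) Neyman orthogonality of $\orthoM$ at $(\theta_0,\eta_0)$; (ii) a product-type convergence condition on the nuisance estimators, implied in particular by each nuisance converging at rate $o(N^{-1/4})$; and (iii) standard parametric regularity conditions (identification of $\theta_0$, smoothness of $\theta\mapsto\orthoM$, integrability, and non-singular Jacobian $J_0$). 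I would verify each in turn.

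First, (i) is exactly the content of \cref{thm:neyman}, so nothing more is needed there. Second, for (ii), I would combine \cref{lemma:informal_nuisances} with the discussion immediately following it: under \cref{assump:dml}, the squared $L^2$ error of each nuisance is bounded by $O(\delta_N^2+\ln(1/\zeta)/N)$ with high probability, and the critical radius satisfies $\delta_N=O(d_N N^{-1/2})$. With the hypothesis $d_N=o(N^{1/4})$, this immediately yields $\norm{\hat s-s_0}_2,\norm{\hat g-g_0}_2 = o(N^{-1/4})$ with high probability, which is more than sufficient for the product-rate requirement.

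Third, I would verify the parametric regularity conditions. Boundedness from \cref{assump:dml} together with compactness of $\Theta$ yields the required uniform moment bounds on $\orthoM$ and its first two $\theta$-derivatives; smoothness follows from the quadratic-in-$g$ structure of $\orthoM$ and the assumption that $h_\theta$ is smoothly parameterised on the interior of $\Theta$. Local identification of $\theta_0$ is obtained by noting that under the true nuisances, $\expectE[\orthoM(\dataset;h_\theta,\eta_0)] = \expectE[(s_0(C,Z)-\expectE[h_\theta(C,A)\mid C,Z])^2]$, which by the IV moment equation in \cref{eq:h_exp} together with \cref{assump:struc} vanishes exactly at $\theta=\theta_0$. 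Once all conditions are checked, Theorem 3.3 of \citet{Chernozhukov2018Double/debiasedParameters} applies directly and delivers $\sqrt{N}(\hat\theta-\theta_0)\to\mathcal{N}(0,\sigma^2)$ with the stated sandwich variance.

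The main obstacle will be the parametric identification and the non-singularity of $J_0$: \cref{assump:struc} identifies $h_0$ only in the nonparametric conditional-moment sense, so one must additionally argue that the parametric map $\theta\mapsto\expectE[h_\theta(C,A)\mid C,Z]$ is locally injective at $\theta_0$ and induces a non-degenerate Jacobian, typically via a full-rank condition on $\expectE[\partial_\theta h_\theta(C,A)\mid C,Z]$ combined with the relevance condition \cref{assump:struc}(c). Once this local identification is in hand, the remaining verifications are routine, and the conclusion follows by direct application of the master DML theorem.
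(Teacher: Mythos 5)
Your proposal is correct and follows essentially the same route as the paper: both reduce the claim to Theorem 3.3 of \citet{Chernozhukov2018Double/debiasedParameters}, discharge Neyman orthogonality via \cref{thm:neyman}, obtain the $o(N^{-1/4})$ nuisance rates from \cref{lemma:informal_nuisances} together with the critical-radius bound $\delta_N=O(d_N N^{-1/2})$, and handle identification by a local expansion around $\theta_0$ plus a non-singularity assumption on $J_0$ (which the paper likewise simply assumes as a technical condition). The only piece you do not explicitly address is the non-degeneracy of the score covariance $\expectE[\orthoM\orthoM^T]$ (Condition (f)), but that is a minor regularity check and does not change the structure of the argument.
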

\cref{thm:dml} states that, with adequately trained nuisance parameter estimators, the estimator error $\hat{\theta}-\theta_0$ is normally distributed and variance shrinks at the rate of $N^{-1/2}$. This implies that $\hat{\theta}$ converges to $\theta_0$ at the rate $O(N^{-1/2})$ with high probability, which allows us to deduce suboptimaltiy bounds for the policy induced by $h_{\hat{\theta}}$ in the next section.


\subsection{Suboptimality Bounds}
From the DML-IV estimator $h_{\hat{\theta}}$, we retrieve (an estimate of) the induced optimal policy as $\hat{\policy}(c)\coloneqq\argmax_a h_{\hat{\theta}}(c,a)$. Recall that the suboptimality of a policy is $\textrm{subopt}(\hat{\policy})\coloneqq\Value(\policy^*)-\Value(\hat{\policy})$. Next, we show a suboptimality bound for the DML-IV policy in terms of the sample size $N$.

\begin{theorem}[Suboptimality Bounds]\label{thm:subopt}
Let the learnt policy from a dataset of size $N$ be $\hat{\pi}(c)\coloneqq\argmax_a h_{\hat{\theta}}(c,a)$, where $\hat{\theta}$ is the DML-IV estimator. Let $L$ be a constant such that $\abs{h_\theta(C,A)-h_{\theta^\prime}(C,A)}\leq L \norm{\theta-\theta^\prime}$ for all $C$ in the support of $\testD$, $A\in\mathcal{A}$, and $\theta,\theta^\prime\in\Theta$. Then, for all $\zeta\in(0,1]$, we have that the suboptimality of $\hat{\pi}$ satisfies
\begin{align*}
\textrm{subopt}(\hat{\pi})=O\left(L\sqrt{\frac{\ln(1/\zeta)}{N}}\right),
\end{align*}
with probability $1-\zeta$.
\end{theorem}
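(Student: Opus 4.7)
The plan is to reduce the suboptimality bound to a concentration bound on $\hat{\theta}$ around $\theta_0$ via the standard argmax decomposition, and then invoke Theorem~\ref{thm:dml} together with the Lipschitz assumption to convert the parameter concentration into a uniform function-space error bound. First I would write
\begin{align*}
\textrm{subopt}(\hat{\pi})
= \expectE_{c\sim\testD}\bigl[h_0(c,\pi^*(c)) - h_0(c,\hat{\pi}(c))\bigr],
\end{align*}
and add and subtract $h_{\hat{\theta}}$ at both $\pi^*(c)$ and $\hat{\pi}(c)$:
\begin{align*}
h_0(c,\pi^*(c)) - h_0(c,\hat{\pi}(c))
&= \bigl[h_0(c,\pi^*(c)) - h_{\hat{\theta}}(c,\pi^*(c))\bigr] \\
&\quad + \bigl[h_{\hat{\theta}}(c,\pi^*(c)) - h_{\hat{\theta}}(c,\hat{\pi}(c))\bigr] \\
&\quad + \bigl[h_{\hat{\theta}}(c,\hat{\pi}(c)) - h_0(c,\hat{\pi}(c))\bigr].
\end{align*}
The middle bracket is pointwise $\leq 0$ because $\hat{\pi}(c)=\argmax_a h_{\hat{\theta}}(c,a)$, so after taking expectations we obtain $\textrm{subopt}(\hat{\pi})\leq 2\sup_{c\in\mathrm{supp}(\testD),\,a\in\actions}\abs{h_{\hat{\theta}}(c,a)-h_0(c,a)}$.

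Next I would invoke the Lipschitz hypothesis $\abs{h_\theta(C,A)-h_{\theta'}(C,A)}\leq L\norm{\theta-\theta'}$ uniformly on the relevant support, yielding
\begin{align*}
\textrm{subopt}(\hat{\pi})\;\leq\; 2L\,\norm{\hat{\theta}-\theta_0}.
\end{align*}
It therefore suffices to control $\norm{\hat{\theta}-\theta_0}$ with high probability. By Theorem~\ref{thm:dml}, $\sqrt{N}(\hat{\theta}-\theta_0)$ is asymptotically centred Gaussian with covariance $\sigma^2$ that is constant in $N$. Applying the standard Gaussian tail bound (or, more carefully, the non-asymptotic concentration version of the DML result from~\citet{Chernozhukov2018Double/debiasedParameters} that underlies Theorem~\ref{thm:dml}) gives a constant $C$ depending only on $\sigma^2$ such that, for any $\zeta\in(0,1]$,
\begin{align*}
\probP\!\left(\norm{\hat{\theta}-\theta_0}\;\geq\; C\sqrt{\frac{\ln(1/\zeta)}{N}}\right)\;\leq\;\zeta.
\end{align*}
Combining this with the Lipschitz reduction yields $\textrm{subopt}(\hat{\pi})\leq 2LC\sqrt{\ln(1/\zeta)/N}=O\!\left(L\sqrt{\ln(1/\zeta)/N}\right)$ on the complementary event, which is the claim.

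The main obstacle is the last step: Theorem~\ref{thm:dml} is stated only as an asymptotic distributional result, whereas the conclusion we need is a finite-sample high-probability bound with an explicit $\ln(1/\zeta)$ dependence. To make this step fully rigorous I would appeal to the non-asymptotic concentration inequality that is buried inside the DML machinery (the same inequality used in~\citet{Chernozhukov2018Double/debiasedParameters} to derive their asymptotic normality), which gives sub-Gaussian-type tails on $\sqrt{N}(\hat{\theta}-\theta_0)$ under the conditions already verified for Theorem~\ref{thm:dml}. Everything else (the argmax decomposition, the cancellation of the middle term, and the Lipschitz step) is routine, so essentially all the technical weight in this proof rests on converting the CLT statement into a Gaussian-type tail, for which the hypotheses of Theorem~\ref{thm:dml} already suffice.
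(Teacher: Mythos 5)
Your proposal is correct and follows essentially the same route as the paper: the same add-and-subtract argmax decomposition with the middle term nonpositive, the same Lipschitz reduction to $\norm{\hat{\theta}-\theta_0}$, and the same appeal to \cref{thm:dml} for the tail bound. The one step you flag as the main obstacle---converting the asymptotic normality into a $\sqrt{\ln(1/\zeta)/N}$ tail---is handled in the paper without any non-asymptotic concentration machinery: it fixes the deviation threshold at the Gaussian quantile $\epsilon=\sigma\Phi^{-1}(1-\zeta/2)/\sqrt{N}$ (so the rescaled threshold $\epsilon\sqrt{N}/\sigma$ is constant in $N$ and weak convergence applies directly, at the price of the bound holding only for $N$ beyond some $K$, which is all a big-$O$ claim needs), and then extracts the $\sqrt{\ln(1/\zeta)}$ dependence from the elementary inequality $\Phi^{-1}(1-y)\leq\sqrt{-2\ln y}$.
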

The proof is deferred to~\cref{appen:subopt}. To the best of our knowledge, this is the first time that the convergence rate and suboptimality bounds of $O(N^{-1/2})$ have been proved for IV regression methods that use DL, matching the suboptimality bounds of the unconfounded bandit. On the other hand, most other DL-based IV regression methods only demonstrate that their estimators converge in the limit.
\section{Experimental Results}\label{sec:exp}

\begin{figure*}[t]
\centering
\begin{subfigure}[t]
{1\textwidth}
\centering
\includegraphics[width=0.6\textwidth]{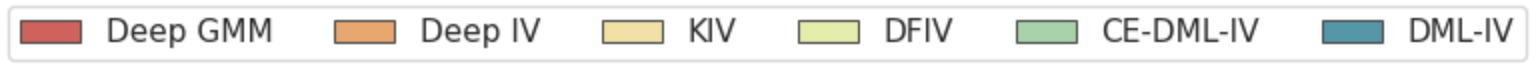}
\end{subfigure}
\begin{subfigure}[t]
{0.32\textwidth}
\centering
\includegraphics[width=1\textwidth]{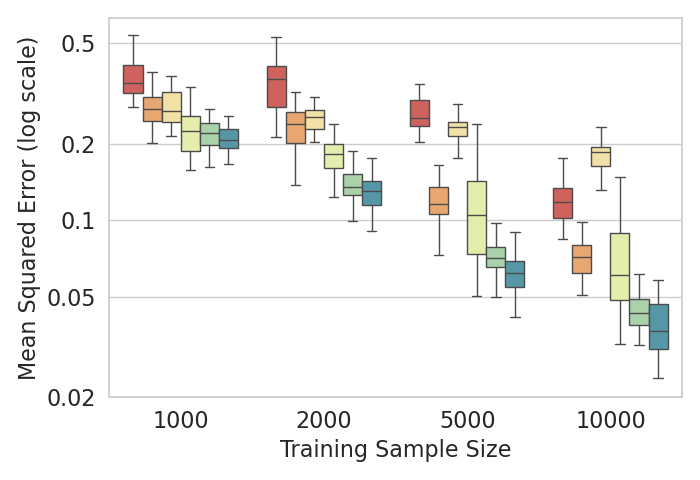}
    \caption{The mean squared error of $\hat{h}$.}
    \label{fig:lowd_mse}
\end{subfigure}
\begin{subfigure}[t]{0.32\textwidth}
\centering\captionsetup{width=0.9\linewidth}\includegraphics[width=1\textwidth]{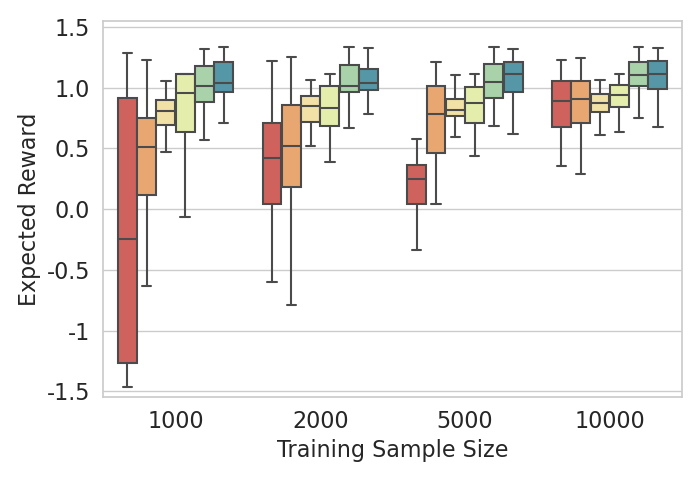}
\caption{The average reward following the policy $\hat{\policy}$ derived from $\hat{h}$.}
    \label{fig:lowd_r}
\end{subfigure}
\begin{subfigure}[t]{0.32\textwidth}
\centering\captionsetup{width=.9\linewidth}
\includegraphics[width=1\textwidth]{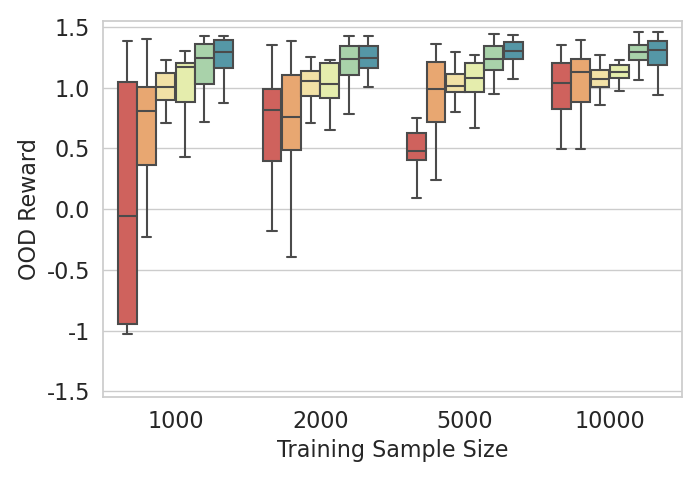}
\caption{The average reward following $\hat{\policy}$ with out of training distribution context.}\label{fig:lowd_ood_r}
\end{subfigure}
\caption{Results on the aeroplane ticket demand dataset with low-dimensional context.}
\end{figure*}

In this section, we empirically evaluate DML-IV for IV regression and offline IV bandit problems. In addition, we evaluate a computationally efficient version of DML-IV, referred to as CE-DML-IV, which does not apply $K$-fold cross-fitting. It trains $\hat{s}$ and $\hat{g}$ only once (instead of $K$ times) using the entire dataset, and can also be considered as an ablation study on $K$-fold cross-fitting. Without $K$-fold cross-fitting, it lacks the theoretical convergence rate guarantees but it still enjoys the partial debiasing effect~\cite{Mackey2018} from the Neyman orthogonal score and trades off computational complexity with bias. We found that CE-DML-IV empirically performs as well as standard DML-IV on low-dimensional datasets. We provide details and discussion regarding CE-DML-IV in~\cref{appen:cedml}.

Our evaluation considers both low- and high-dimensional contexts, as well as semi-synthetic real-world datasets. We compare our methods with leading modern IV regression methods Deep IV~\cite{Hartford2017DeepPrediction}, DeepGMM~\cite{Bennett2019DeepAnalysis}, KIV~\cite{Singh2019} and DFIV~\cite{Xu2020}. In this section we use DNN estimators for both stages with network architecture and hyper-parameters provided in~\cref{appen:networks}. Additional results of DML-IV using tree-based estimators such as Random Forests and Gradient Boosting are provided in~\cref{appen:tree-based}, where SOTA performance is also demonstrated. The algorithms are implemented using PyTorch~\cite{Paszke2019}, and the code is available on GitHub\footnote{\url{https://github.com/shaodaqian/DML-IV}}.

\subsection{Aeroplane Ticket Demand Dataset}

\begin{figure*}[t]
\centering
\begin{subfigure}[t]
{1\textwidth}
\centering
\includegraphics[width=0.6\textwidth]{legend_flat.png}
\end{subfigure}
\begin{subfigure}[t]
{0.32\textwidth}
\centering
\includegraphics[width=1\textwidth]{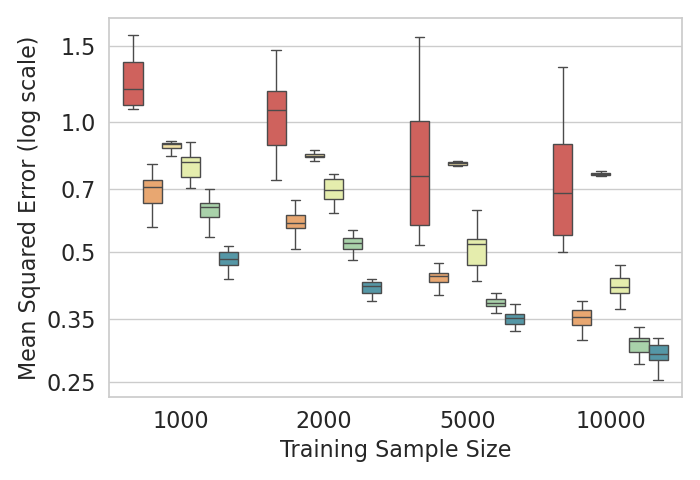}
    \caption{The mean squared error of $\hat{h}$.}
    \label{fig:mnist_mse}
\end{subfigure}
\begin{subfigure}[t]{0.32\textwidth}
\centering\captionsetup{width=0.9\linewidth}\includegraphics[width=1\textwidth]{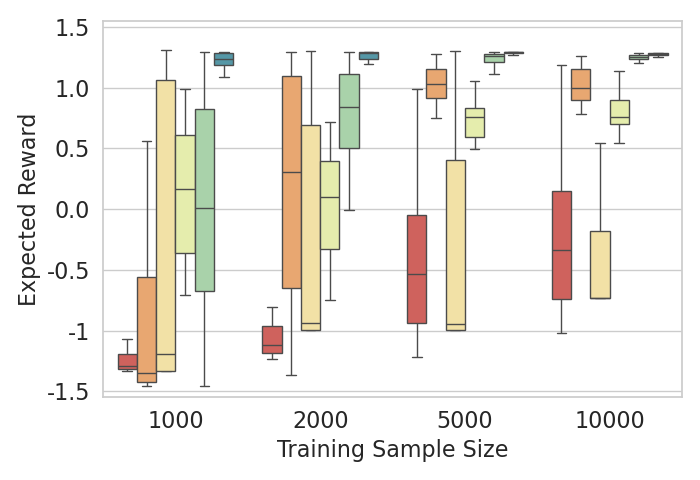}
\caption{The average reward following the policy $\hat{\policy}$ derived from $\hat{h}$.}
    \label{fig:mnist_r}
\end{subfigure}
\begin{subfigure}[t]{0.32\textwidth}
\centering\captionsetup{width=.9\linewidth}
\includegraphics[width=1\textwidth]{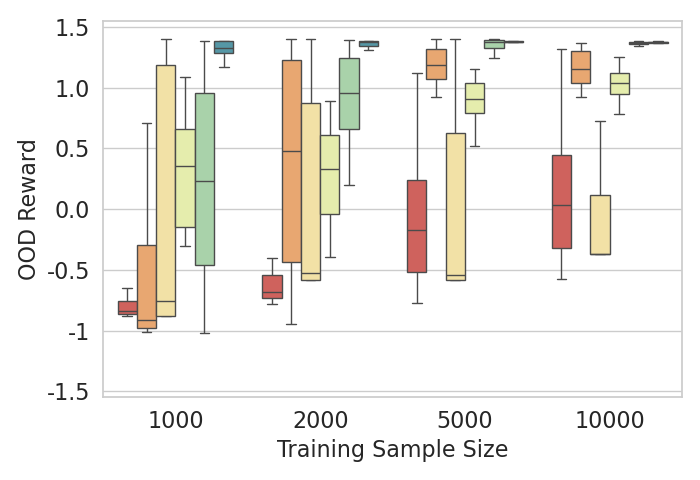}
\caption{The average reward following $\hat{\policy}$ with out of training distribution context.}\label{fig:mnist_ood_r}
\end{subfigure}
\caption{Results on the aeroplane ticket demand dataset with high-dimensional context. }
\end{figure*}

We first conduct experiments for IV regression on the aeroplane ticket demand dataset, which is a synthetic dataset introduced by~\citet{Hartford2017DeepPrediction} that is now a standard benchmark for nonlinear IV methods. In this dataset, we aim to understand how ticket prices $p$ affect ticket sales $r$. We observe two context variables, which are the time of year $t\in[0,10]$ and customer type $s\in[7]$ variables, the latter categorised by the level of price sensitivity. Price and context affect sales through $h_0((t,s),p)=100+(10+p)\cdot s \cdot \psi(t)-2p$, where $\psi(t)$ is a complex nonlinear function. However, the noise of $r$ and $p$ is correlated, which indicates the existence of unobserved confounders. The fuel price $z$ is introduced as an instrumental variable. Details of this dataset are included in~\cref{appen:demand}.

The results for learning $h_0$ with this dataset of various sizes are provided in~\cref{fig:lowd_mse}. We ran each method 20 times and report the mean squared errors (MSE) between the estimators $\hat{h}$ and $h_0$, where the median, 25th and 75th percentiles are shown. It can be seen that DML-IV performs better than other IV regression methods for all dataset sizes. CE-DML-IV, which requires significantly less computation, matches the performance of DML-IV in this case.

\noindent\textbf{High-Dimensional Feature Space} 

In real applications, we
typically do not observe variables such as the customer type as explicit categories. Therefore, we follow~\citet{Hartford2017DeepPrediction} and consider the case where the customer type $s\in[7]$ is replaced by images of the corresponding handwritten digits from the MNIST dataset~\cite{LeCun2010} to evaluate our methods with high-dimensional ($28^2$=784 dimensions) inputs. The task remains to learn $h_0$, but the algorithms are no longer explicitly given the 7 customer types, and instead have to infer the relationship between the image data and the outcome. Results for IV regression are plotted in~\cref{fig:mnist_mse}, where DML-IV and CE-DML-IV outperforms all other methods. In these high-dimensional settings, regularisation is heavily used to avoid overfitting. DML-IV demonstrates the benefits of using DML to reduce both the regularisation and overfitting bias caused by learning the nuisance parameters.

To demonstrate the robustness of DML-IV, we first provide a sensitivity analysis against hyperparameter changes in~\cref{appen:sensitivity}. We evaluate DML-IV and CE-DML-IV on the aeroplane ticket demand datasets under a range of hyperparameters, where  stable performance is observed. In addition, we consider the case when the IV is weakly correlated with the action in~\cref{appen:weak_iv}, where we empirically demonstrate that DML-IV and CE-DML-IV perform significantly better than SOTA methods under weak instruments.

\subsection{Offline IV Bandit}

We also evaluate DML-IV's ability to learn good decision policies in the offline IV bandit problem. We reuse the aeroplane ticket demand dataset and aim to find the best pricing policy that maximises sales. From the learnt $\hat{h}$, for each context sampled from the test distribution, we retrieve the best action by uniformly sampling actions from the action space $\actions$ and selecting the action for which $\hat{h}$ returns the highest value. Using this induced policy $\hat{\policy}$, we compare the expected reward following $\hat{\policy}$ over the test distribution.

For the low-dimensional ticket demand dataset, we first set the test distribution to be the same as the training distribution and plot the average rewards in~\cref{fig:lowd_r}. In~\cref{fig:lowd_ood_r}, we shift the test distribution out of the training distribution by incrementing the distribution of $t$ by $1$. For the high-dimensional setting, \cref{fig:mnist_r} and~\cref{fig:mnist_ood_r} demonstrate the expected rewards for test distributions in and out of the training distribution, respectively. There is a clear trend that a better fitted (low MSE) $\hat{h}$ leads to an induced policy with higher expected reward. In all cases, DML-IV outperforms all other methods, especially in the high-dimensional setting, where DML-IV consistently learns the near-optimal policy with only 2000 samples. CE-DML-IV, on the other hand, only matches the performance of DML-IV for the low-dimensional setting, but still outperforms the other methods in the high-dimensional setting.

We only compare with other IV regression methods because there are no offline bandit methods that consider the IV setting, and standard offline bandit algorithms (e.g.,~\cite{Valko2013,Jin2021,Nguyen-Tang2022}) fail to learn meaningful policies when the dataset is confounded, as demonstrated in~\cref{appen:offline_bandit}.

\subsection{Real-World Decision Problem}

Lastly, we test the performance of DML-IV on real-world datasets. The true counterfactual prediction function is rarely available for real-world data. 
Therefore, in line with previous approaches~\cite{Shalit2017,Wu2023,Schwab2019,Bica2020}, we instead consider two semi-synthetic real-world datasets IHDP\footnote{IHDP: \url{https://www.fredjo.com/}.}~\cite{Hill2011} and PM-CMR\footnote{PM-CMR:\url{https://doi.org/10.23719/1506014}.}~\cite{Wyatt2020}. We directly use the continuous variables from IHDP and PM-CMR as context variables, and generate the outcome variable with a nonlinear synthetic function following~\citet{Wu2023}. There are 470 and 1350 training samples in IHDP and PM-CMR, respectively (for details see~\cref{appen:real}). We also run each method 20 times, where the MSE of $\hat{h}$ and the expected reward of the induced policy $\hat{\policy}$ on the test dataset are plotted in~\cref{fig:real_dataset}. DML-IV and CE-DML-IV demonstrate comparable, if not lower, MSE of fitting $\hat{h}$ than the other methods, while outperforming all other methods in average reward. This shows that our algorithm can reliably learn the counterfactual prediction function and policies with the highest average reward from real-world data.

\begin{figure}[t]
\centering
\begin{subfigure}[t]{1\textwidth}
\centering
\includegraphics[width=0.6\textwidth]{legend_flat.png}
\end{subfigure}
\begin{subfigure}[t]{0.5\textwidth}
\centering
\includegraphics[width=0.6\textwidth]{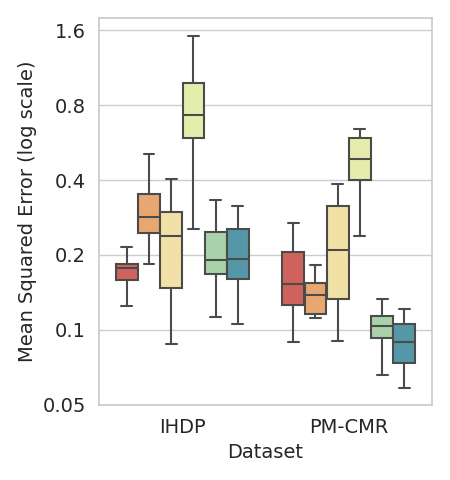}
\end{subfigure}
\begin{subfigure}[t]{0.49\textwidth}
\centering
\includegraphics[width=0.6\textwidth]{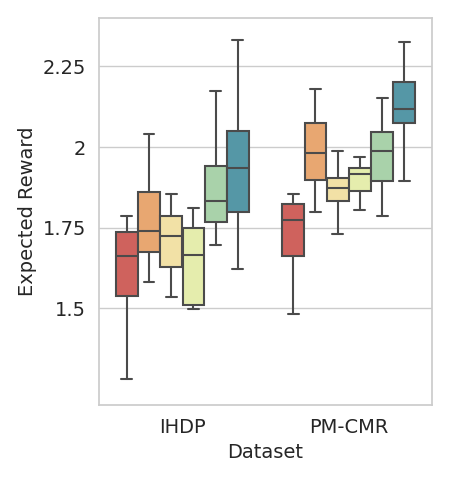}
\end{subfigure}
\vspace{-0.8cm}
\caption{The mean squared error of $\hat{h}$ and average reward following $\hat{\policy}$ for the real-world datasets.}
\label{fig:real_dataset}
\end{figure}
\section{Conclusion}\label{sec:conclusion}

We have proposed a novel method for instrumental variable regression, DML-IV. By leveraging IVs and DML on offline data, DML-IV can learn counterfactual predictions and effective decision policies with fast convergence rate and suboptimality guarantees by mitigating the regularisation and overfitting biases of DL.
We evaluated DML-IV on IV regression benchmarks and IV bandit problems, including semi-synthetic real-world data, experimentally showing it is superior compared to SOTA IV regression methods. 

Future work includes considering other estimation methods for the nuisance parameters following our Neyman-orthogonal score, and extending the method to sequential decision problems and reinforcement learning in the presence of hidden confounders~\cite{Namkoong2020}.

\section*{Acknowledgments}
This work was supported by the EPSRC Prosperity Partnership FAIR (grant number EP/V056883/1). DS acknowledges funding from the Turing Institute and Accenture collaboration. AS was partially supported by AI Singapore, grant
AISG2-RP-2020-018. FQ acknowledges funding from ELSA: European Lighthouse on Secure and Safe AI project (grant agreement
No. 101070617 under UK guarantee). MK receives funding from the ERC under the European Union’s Horizon 2020 research and innovation programme (\href{http://www.fun2model.org}{FUN2MODEL}, grant agreement No.~834115).

\section*{Impact Statement}
The goal of the paper is to develop a methodology to learn high-performing decision policies from offline data. There are many applications of our work in automated decision making, for example, in planning, healthcare, and finance. The theoretical guarantees that we provide ensure the reliability and suboptimality guarantees of the learnt policies. We do not foresee negative implications of our methodology, but would caution against deploying it without human input and recommend additional validation in any new setting to reduce the risk of misapplication.

\bibliographystyle{abbrvnat}
\bibliography{main.bib}

\newpage

\renewcommand{\thesection}{\Alph{section}}
\setcounter{section}{0}
\noindent {\LARGE\textbf{Appendix}}
\section{Computationally Efficient CE-DML-IV}\label{appen:cedml}
\begin{algorithm}[ht]
   \caption{Computationally Efficient CE-DML-IV}
   \label{alg:dml-iv}
\begin{algorithmic}
   \STATE {\bfseries Input:} Dataset $\dataset$ with size $N$, mini-batch size $n_b$
    \STATE {\bfseries Output:} The CE-DML-IV estimator $h_{\hat{\theta}}$
   \STATE Learn $\hat{s}$ and $\hat{g}$ using $\dataset$
   \STATE Initialise $h_{\hat{\theta}}$
   \REPEAT
   \STATE Sample $n_b$ data $(c_i,z_i)$ from $\dataset$
   \STATE $\mathcal{L}=\hat{\expectE}_{(c_i,z_i)}\left[(\hat{s}(c,z)-\hat{g}(h_\theta,c,z))^2\right]$
   \STATE Update $\hat{\theta}$ to minimise loss $\mathcal{L}$ 
    \UNTIL{convergence}
\end{algorithmic}
\end{algorithm}

The standard DML-IV with $K$-fold cross-fitting trains $\hat{s}$ and $\hat{g}$ $K$ times on different subsets of the dataset to tackle overfitting bias, but it is computationally expensive. Therefore, as mentioned in~\cref{sec:exp}, we also evaluate CE-DML-IV, a computationally efficient version of DML-IV that does not apply $K$-fold cross-fitting and trains $\hat{s}$ and $\hat{g}$ only once using the entire dataset. It uses the same Neyman orthogonal score as the standard DML-IV, so it still enjoys the partial debiasing effect~\cite{Mackey2018} from the Neyman orthogonal score. However, without $K$-fold cross-fitting, it lacks the theoretical convergence rate guarantees provided by~\cref{thm:dml} and \cref{thm:subopt}. CE-DML-IV can be viewed as a trade-off between computational complexity and theoretical guarantees, and we found that CE-DML-IV empirically performs as well as standard DML-IV on low-dimensional datasets, where overfitting bias is not prevalent.

\section{Standard Loss Function for IV Regression}\label{appen:score}

The standard score (or loss) function for two-stage IV regression is $\ell=(R-g(h,c,z))^2$, as described in~\cref{eq:optim}. This score is not Neyman orthogonal because, first of all, $\expectE[(R-g_0(h_0,c,z))^2]=\expectE[(R-\expectE[R\lvert C,Z])^2]\neq0$ since $\expectE[h_0\lvert C,Z]=\expectE[R\lvert C,Z]$ and $R-\expectE[R\lvert C,Z]\neq0$ due to the noise on $R$.

Secondly, the derivative against small changes in $g$ for score $\expectE[(R-g_0(h_0,c,z))^2]$ is
\begin{align*}
\frac{\partial}{\partial r}\expectE\Bigl[&(R-g_0(h_0,C,Z)-r\cdot g(h_0,C,Z))^2\Bigr]\\
=&\frac{\partial}{\partial r}\expectE\Bigl[(R-g_0(h_0,C,Z))^2-2r\cdot(R-g_0(h_0,C,Z))g(h_0,C,Z)+r^2\cdot g(h_0,C,Z)^2\Bigr]\\
=&\expectE\Bigl[2(R-g_0(h_0,C,Z))g(h_0,C,Z)+2r\cdot g(h_0,C,Z)^2
\Bigr],
\end{align*}
and, when $r=0$, this derivative evaluates to
\begin{equation*}
\expectE[2(R-g_0(h_0,c,z))g(h_0,c,z)]=\expectE[2
(R-\expectE[R\lvert C,Z])g(h_0,c,z)]
\end{equation*}
which does not equal to 0 for general $g\in\mathcal{G}$ since generally $g(h_0,c,z)$ and the residual $(R-\expectE[R\lvert C,Z])$ are correlated. Therefore, the standard score function for two-stage IV regression can not be used to create a DML estimator.

\section{Omitted Proofs}
In this section, we state all the conditions required to prove the $N^{-1/2}$ convergence rate guarantees for the DML-IV estimator, and provide the omitted proofs in the main paper for~\cref{thm:neyman},~\cref{lemma:informal_nuisances},~\cref{thm:dml} and~\cref{thm:subopt}.

\subsection{DML-IV $N^{-1/2}$ Convergence Rate Guarantees}\label{appen:dml}

To obtain $N^{-1/2}$ convergence rate guarantees of the DML-IV estimator, the following conditions must be satisfied.

\begin{condition}[Conditions for $N^{-1/2}$ convergence of DML, Assumption 3.3 and 3.4 in~\citet{Chernozhukov2018Double/debiasedParameters}]\label{condition:dml}
For $N\geq3$, all the following conditions hold. (a): The true parameter $\theta_0$ obeys $\expectE[\orthoM(\dataset;h_0,(s_0,g_0))]=0$ and $\Theta$ contains a ball of radius $c_1 N^{-1/2}\log N$ centered at $\theta_0$. (b): The map $(\theta,(s,g))\mapsto \expectE[\orthoM(\dataset;h_{\theta},(s,g))]$ is twice continuously Gateaux-differentiable. (c): For all $\theta\in\Theta$, the identification relationship
\begin{align}
\norm{\expectE[\orthoM(\dataset;h_{\theta},(s_0,g_0))]}\gtrsim \norm{J_0(\theta-\theta_0)}
\end{align}
is satisfied, where $J_0\coloneqq\partial_{\theta^\prime}\{\expectE[\orthoM(\dataset;h_{\theta^\prime},(s_0,g_0))]\}|_{\theta^\prime=\theta_0}$ is the Jacobian matrix, with singular values strictly positive (bounded away from zero). 
(d): The score $\orthoM$ obeys the Neyman orthogonality. (e): Let $K$ be a fixed integer. Given a random partition $\{I_k\}_{k=1}^K$ of indices $[N]$ each of size $n=N/K$, we have that the nuisance parameter estimator $\hat{\eta}$ learnt using data with indices $I^c_k$ belongs to a shrinking realisation set $\mathcal{T}_N$, and the nuisance parameters should be estimated at the $o(N^{-1/4})$ rate, i.e., $\norm{\hat{\eta}-\eta_{0}}_2=o(N^{-1/4})$. (f): All eigenvalues of the matrix $\expectE[\orthoM(\dataset;h_{\theta_0},(s_0,g_0))\orthoM(\dataset;h_{\theta_0},(s_0,g_0))^T]$ are strictly positive (bounded away from zero).
\end{condition}

We will check all these conditions in~\cref{thm:neyman},~\cref{lemma:nuisances} and~\cref{thm:dml}.

\begin{proof}[Proof of\textbf{~\cref{thm:neyman}}:]\label{appen:neyman}
Firstly, by Equation~\ref{eq:h_exp}, we have $s_0(C,Z)=g_0(h_0,C,Z)$, thus
\begin{equation*}
\orthoM(\dataset;h_0,(s_0,g_0))=\expectE\Bigl[(s_0(C,Z)-g_0(h_0,C,Z))^2\Bigr]=0
\end{equation*}
Then we compute the derivative w.r.t. small changes in the nuisance parameters. For all $s,g\in \mathcal{S}, \mathcal{G}$,
\begin{align*}
\frac{\partial}{\partial r}&\expectE\Bigl[(s_0(C,Z)+r\cdot s(C,Z)-g_0(h_0,C,Z)-r\cdot g(h_0,C,Z))^2\Bigr]\\
=&\frac{\partial}{\partial r}\expectE\Bigl[2r(s_0(C,Z)-g_0(h_0,C,Z))(s(C,Z)-g(h_0,C,Z))+r^2(s(C,Z)-g(h_0,C,Z))^2\Bigr]\\
=&\expectE\Bigl[2(s_0(C,Z)-g_0(h_0,C,Z))(s(C,Z)-g(h_0,C,Z))+2r(s(C,Z)-g(h_0,C,Z))^2\Bigr],
\end{align*}
and, when at $r=0$, the derivative evaluates to
\begin{align*}
    \expectE\Bigl[&2(s_0(C,Z)-g_0(h_0,C,Z))(s(C,Z)-g(h_0,C,Z))\Bigr]\\
    &=\expectE\Bigl[0 \times(s(C,Z)-g(h_0,C,Z))\Bigr]\\
    &=0 \quad \forall s,g\in \mathcal{S}, \mathcal{G},
\end{align*}
since $s_0(C,Z)=\expectE[R|C,Z]=\expectE[h_0|C,Z]=g_0(h_0,C,Z)$. Therefore, our moment function $\orthoM$ is Neyman orthogonal at $(h_0,(s_0,g_0))$.
\end{proof}

\begin{lemma}[Formal version of~\cref{lemma:informal_nuisances}: Nuisances parameters convergence]\label{lemma:nuisances}
If Assumption~\ref{assump:dml} holds, let $\delta_N$ be an upper bound on the critical radius of the two following function spaces:
\begin{align}
\{(C,Z)\mapsto\gamma(s(C,Z)-s_0(C,Z)): s\in\mathcal{S}_N,\gamma\in[0,1]\};\label{eq:s_space}\\
\{(C,Z)\mapsto\gamma(g(C,Z,h_0)-g_0(C,Z,h_0)): g\in\mathcal{G}_N,\gamma\in[0,1]\},\label{eq:g_space}
\end{align}
and suppose that all functions $f$ in the two spaces above satisfy $\norm{f}_\infty\leq B$ for some $B\in\realNumber^+$.
Then, for some universal constants $c_1$ and $c_2$, we have that with probability $1-\zeta$:
\begin{align*}
    \norm{\hat{s}-s_0}_2^2&\leq c_1\left(\delta_N^2+\frac{B^2\log(1/\zeta)}{N}+\inf_{s_*\in\mathcal{S}_N}\norm{s_*-s_0}_2^2\right);\\
    \norm{\hat{g}-g_0}_2^2&\leq c_2\left(\delta_N^2+\frac{B^2\log(1/\zeta)}{N}+\inf_{g_*\in\mathcal{G}_N}\norm{g_*-g_0}_2^2\right).
\end{align*}
\end{lemma}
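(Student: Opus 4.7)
The plan is to reduce both bounds to a standard critical-radius / localized Rademacher complexity result for bounded ERM in star-shaped function classes (for instance, the offset-complexity bound of Bartlett--Bousquet--Mendelson 2005, or Theorem 14.20 of Wainwright 2019). Both nuisance parameters $s_0$ and $g_0(\,\cdot\,,h_0)$ are the population minimizers of an $L_2$ regression problem (they are conditional expectations), so they are the Bayes regressors with respect to a bounded, squared loss. The star-shaped localized classes \cref{eq:s_space} and \cref{eq:g_space} are precisely the objects whose critical radius $\delta_N$ controls the excess empirical risk of such an ERM. Once this template is set up, the lemma becomes a direct instantiation.

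First I would handle $\hat s$. The estimator minimizes $\frac{1}{N}\sum_i (R_i - s(C_i,Z_i))^2$ over $s \in \mathcal{S}_N$, and by~\cref{assump:dml} both $R$ and every $s \in \mathcal{S}_N$ are bounded by $B$, so the squared loss is Lipschitz and bounded on the relevant range. Since $s_0(C,Z) = \expectE[R\mid C,Z]$ is the unconstrained population minimizer, the basic inequality $\norm{\hat s - s_0}_2^2 \le (\text{empirical process deviation}) + 2\inf_{s_* \in \mathcal{S}_N}\norm{s_* - s_0}_2^2$ holds. Controlling the deviation via Talagrand concentration localized to the star-shaped class in \cref{eq:s_space} and using the definition of critical radius yields, with probability at least $1-\zeta$, the claimed bound with the $\delta_N^2$ and $B^2\log(1/\zeta)/N$ terms. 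The argument for $\hat g$ follows the same template: at a fixed $h_0$, $g_0(h_0,\cdot)$ is the Bayes $L_2$-regressor of the target $h_0(C,A)$ on $(C,Z)$, so restricting to the realization set $\mathcal{G}_N$ reduces the problem to bounded ERM in a star-shaped class with critical radius controlled by $\delta_N$.

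The main obstacle lies in the $\hat g$ step. The algorithm constructs $\hat g$ by plugging $h$ into a separately estimated conditional density $\hat F(A\mid C,Z)$, rather than by direct squared-loss ERM with target $h_0(C_i,A_i)$. To invoke the critical-radius bound as a black box, I would argue that, evaluated at $h=h_0$, the induced map $(C,Z)\mapsto \hat g(h_0,C,Z)$ lies in $\mathcal{G}_N$ and is an (approximate) minimizer of the empirical $L_2$ loss against $h_0(C_i,A_i)$ over that class; this is the natural ERM interpretation of plug-in density estimation under a Kullback--Leibler/maximum-likelihood objective. An alternative, cleaner route is an offset-complexity argument that does not demand exact ERM, but only boundedness and a critical-radius bound on the excess empirical risk, matching exactly the star-shaped localization stated in the lemma. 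Either route produces the additive $\inf_{g_* \in \mathcal{G}_N}\norm{g_* - g_0}_2^2$ approximation term and the stochastic terms $\delta_N^2$ and $B^2\log(1/\zeta)/N$, completing the proof.
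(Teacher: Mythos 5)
Your proposal is essentially the paper's argument: the paper likewise reduces both bounds to a black-box localized-complexity result --- Theorem 1 of Chernozhukov, Newey and Singh (2021) --- by writing each nuisance as a population least-squares minimizer ($s_0$ against $R$, and $g_0(\cdot,\cdot,h_0)$ against $\int h_0(C,A)\,\probP(A\mid C,Z)\,dA$), verifying boundedness and mean-square continuity with constant $B^2$ via H\"older and \cref{assump:dml}, and reading off the $\delta_N^2+B^2\log(1/\zeta)/N+\inf\norm{\cdot}_2^2$ bound, which is the same critical-radius/star-shaped localization you invoke through Bartlett--Bousquet--Mendelson and Wainwright. The obstacle you flag for $\hat{g}$ (the estimator is a conditional-density plug-in rather than a direct squared-loss ERM over $\mathcal{G}_N$) is a genuine subtlety, but the paper does not resolve it either --- it simply posits the squared-loss formulation for $g$ and applies the same theorem.
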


\begin{proof}[Proof of\textbf{~\cref{lemma:nuisances}}:]

We will mainly use the result from Theorem 1 of~\citet{Chernozhukov2021AutomaticRegression}, which states the following. For a function $\alpha$ that is the minimizer of a loss function that can be represented as $\expectE[-2m(\dataset,\alpha)+\alpha(x)^2]$, where $\dataset$ is the offline dataset and $m$ is some moment function that satisfies 
\begin{equation*}
\expectE[(m(W,\alpha)-m(W,\alpha^\prime))^2]\leq M\norm{\alpha-\alpha^\prime}_2^2\quad\forall\alpha,\alpha^\prime\in\mathcal{A}_N.
\end{equation*}
Let $\delta_N$ be an upper bound on the critical radius of the two function spaces:
\begin{align*}
\{W\mapsto\gamma(\alpha(W)-\alpha_0(W)): \alpha\in\mathcal{A}_N,\gamma\in[0,1]\};\\
\{W\mapsto\gamma(m(W,\alpha)-m(W,\alpha_0)): \alpha\in\mathcal{A}_N,\gamma\in[0,1]\}.
\end{align*}
Then, if $\norm{\alpha}_\infty\leq B$ for some $B\in\realNumber^+$, there exists a universal constant $c$ such that with probability $1-\zeta$,
\begin{align*}
\norm{\hat{\alpha}-\alpha_0}_2^2\leq c \left(\delta_N^2+\frac{M\log(1/\zeta)}{N}+\inf_{\alpha_*\in\mathcal{A}_N}\norm{\alpha_*-\alpha_0}^2_2\right).
\end{align*}
In our case, we show that the loss function for both $s$ and $g$ satisfies the above conditions, and thus Theorem 1 of \citet{Chernozhukov2021AutomaticRegression} is applicable to provide an upper bound on the convergence rate of our nuisance parameters.\\

The loss function for $s\in\mathcal{S}_N$ is 
\begin{align*}
s_0&=\argmin_{s\in\mathcal{S}}\expectE\left[(R-s(C,Z))^2\right]\\
&=\argmin_{s\in\mathcal{S}}\expectE\left[R^2-2Rs(C,Z)+s(C,Z)^2\right]\\
&=\argmin_{s\in\mathcal{S}}\expectE\left[-2Rs(C,Z)+s(C,Z)^2\right],
\end{align*}
where we can set $m(W,s)=Rs(C,Z)$ and check that
\begin{align*}
\expectE[(Rs(C,Z)-Rs_0(C,Z))^2] &\leq \expectE[R^2(s(C,Z)-s_0(C,Z))^2]\\
&\leq\norm{R^2}_{\infty}\expectE[(s(C,Z)-s_0(C,Z))^2]\\
&=B^2 \norm{s(C,Z)-s_0(C,Z)}_2^2,
\end{align*}
by Hölder's inequality and the assumption that $\norm{R}_\infty\leq B$. Therefore, by Theorem 1 of~\citet{Chernozhukov2021AutomaticRegression}, there exists a universal constant $c_1$ such that with probability $1-\zeta$,
\begin{align*}
    \norm{\hat{s}-s_0}_2^2\leq c_1\left (\delta_N^2+\frac{B^2 \log(1/\zeta)}{N}+\inf_{s_*\in\mathcal{S}_N}\norm{s_*-s_0}^2_2\right ),
\end{align*}
where recall $\delta_N$ is an upper bound on the critical radius of the function spaces defined in Eq. \eqref{eq:s_space}.
\\

For the second part of the proof, recall that
\begin{align*}
g(h,c,z)&=\int h(C,A)F(A\mid C,Z) dA,
\end{align*}
where $F(A\mid C,Z)$ is some distribution over $A$ and $F_0(A\mid C,Z)=\probP(A\mid C,Z)$ is the distribution of $A$ conditional on $(C,Z)$. Therefore, $g_0$ should minimise the following loss:
\begin{align*}
g_0&=\argmin_{g\in\mathcal{G}}\expectE\left[\left(\int h_0(C,A)\probP(A\mid C,Z) dA-g(C,Z,h_0)\right)^2\right]\\
&=\argmin_{g\in\mathcal{G}}\expectE\left[-2\int h_0(C,A)\probP(A\mid C,Z) dA\cdot g(C,Z,h_0)+g(C,Z,h_0)^2\right],
\end{align*}
where we can set $m(\dataset,g)=\int h_0(C,A)\probP(A\mid C,Z) dA\cdot g(C,Z,h_0)$ and check that
\begin{align*}
\expectE&\left[\left(\int h_0(C,A)\probP(A\mid C,Z) dA\cdot g(C,Z,h_0)-\int h_0(C,A)\probP(A\mid C,Z) dA\cdot g_0(C,Z,h_0)\right)^2\right]\\
=&\expectE\left[\left(\int h_0(C,A)\probP(A\mid C,Z) dA\right)^2\cdot \left(g(C,Z,h_0)-g_0(C,Z,h_0)\right)^2\right]\\
=&\expectE\left[g_0(C,Z,h_0)^2\cdot \left(g(C,Z,h_0)-g_0(C,Z,h_0)\right)^2\right]\\
\leq & \norm{g_0^2}_{\infty} \norm{g(C,Z,h_0)-g_0(C,Z,h_0)}_2^2 \\
\leq & B^2 \norm{g(C,Z,h_0)-g_0(C,Z,h_0)}_2^2,
\end{align*}
by Hölder's inequality, where $M$ is a constant since $g$ is bounded. Therefore, by Theorem 1 of~\citet{Chernozhukov2021AutomaticRegression}, there exists a universal constant $c_2$ such that with probability $1-\zeta$,
\begin{align*}
\norm{g-g_0}_2^2\leq c_2\left (\delta_N^2+\frac{B^2 \log(1/\zeta)}{N}+\inf_{g_*\in\mathcal{G}_N}\norm{g_*-g_0}^2_2\right ),
\end{align*}
where again $\delta_N$ is an upper bound on the critical radius of the function spaces defined in Equation~\ref{eq:g_space}, which completes the proof.
\end{proof}

Now, we are ready to prove~\cref{thm:dml}, which is our main theorem that states the $N^{-1/2}$ convergence rate guarantees for the DML-IV estimator.

\begin{proof}[Proof of\textbf{~\cref{thm:dml}}:]\label{appen:dml_guarantee}
We mainly use Theorem 3.3 from~\citet{Chernozhukov2018Double/debiasedParameters}, where properties of the DML estimator for non-linear scores are demonstrated. It states that, if Condition~\ref{condition:dml} holds, the DML estimator $\hat{\theta}$ is concentrated in a $1/\sqrt{N}$ neighbourhood of $\theta_0$:
\begin{align*}
    \frac{\sqrt{N}}{\sigma}(\hat{\theta}-\theta_0)=\frac{1}{\sqrt{N}}\sum{\bar{\orthoM}(\dataset_i)+O(\rho_N)}\rightarrow \mathcal{N}(0,1) \text{ in distribution},
\end{align*}
where $\bar{\orthoM}(\cdot)\coloneqq -\sigma^{-1}J_0^{-1}\orthoM(\cdot,\theta_0,\eta_0)$ is the influence function, $J_0$ is the Jacobian of $\orthoM$, the approximate variance is $\sigma^2 \coloneqq J_0^{-1}\expectE[\orthoM(\dataset,\theta_0,\eta_0)\orthoM(\dataset,\theta_0,\eta_0)^T](J_0^{-1})^T$, and the size of the remainder $\rho_N$ converges to 0. Therefore, we only need to check whether, under Assumption~\ref{assump:struc} and~\ref{assump:dml}, all of Condition~\ref{condition:dml} for DML $N^{-1/2}$ convergence rate is satisfied. Conditions (a) and (d) are satisfied by Theorem~\ref{thm:neyman}. Condition (b) is satisfied since $(s-g)^2$ is twice continuously differentiable with respect to $s$ and $g$. 



Condition (c) is a sufficient identifiability condition, which states the closeness of the loss function at point $\theta$ to zero and implies the closeness of $\theta$
to $\theta_0$. This assumption is standard in condition moment problems. To check condition (c), we first point out that under analytical assumptions for $s, g$, and $h$, we can write down first order Taylor series for the score function $\expectE[\orthoM(\dataset;h_{\theta},(s_0,g_0))]$ around the point $\theta_0$,
\begin{align*}
\expectE[\orthoM(\dataset;h_{\theta},(s_0,g_0))] = \expectE[\orthoM(\dataset;h_{\theta_0},(s_0,g_0))] + J_0 (\theta - \theta_0) + O(\norm{\theta - \theta_0}^2).
\end{align*}
Plugging in validity of the score function $\orthoM(\dataset;h_{\theta},(s_0,g_0))$, i.e.,  $\expectE[\orthoM(\dataset;h_{\theta_0},(s_0,g_0))] = 0$, we infer that
\begin{align*}
    \norm{\expectE[\orthoM(\dataset;h_{\theta},(s_0,g_0))]}\gtrsim \norm{J_0(\theta-\theta_0)}.
\end{align*}
Now for identifiability, we only need to assume that $J_0 J_0^T$ is non-singular, which is a common technical assumption.

Condition (e) is satisfied since we have that the effective dimension $d_N=o(N^{1/4})$, and together with Lemma~\ref{lemma:nuisances} and the fact that the upper bound of the critical radius $\delta_N=O(d_N N^{-1/2})$ (see~\cref{appen:critical_radius}), the nuisance parameters converge sufficiently quickly to ensure $\norm{\hat{s}-s_0}_2\leq O(\delta_N+N^{-1/2})=O(d_N N^{-1/2})=o(N^{-1/4})$ and $\norm{\hat{g}-g_0}_2\leq O(\delta_N+N^{-1/2})=O(d_N N^{-1/2})=o(N^{-1/4})$. 

Condition (f) is the non-degeneracy assumption for covariance of the score function $\orthoM(\dataset;h_{\theta},(s_0,g_0))$. By definition,
\begin{align*}
    \expectE[\orthoM(\dataset;h_{\theta},(s_0,g_0)) \orthoM(\dataset;h_{\theta},(s_0,g_0))^T] = \int \orthoM(\dataset;h_{\theta},(s_0,g_0)) \orthoM(\dataset;h_{\theta},(s_0,g_0))^T d\probP(\dataset).
\end{align*}
By trace trick, for each datapoint $\dataset$, the only eigenvalue of $\orthoM(\dataset;h_{\theta},(s_0,g_0)) \orthoM(\dataset;h_{\theta},(s_0,g_0))^T $ is $\norm{\orthoM(\dataset;h_{\theta},(s_0,g_0))}^2 \geq 0 $, with $\orthoM(\dataset;h_{\theta},(s_0,g_0))$ as the corresponding eigenvector. Therefore, $\expectE[\orthoM(\dataset;h_{\theta},(s_0,g_0)) \orthoM(\dataset;h_{\theta},(s_0,g_0))^T]$ is positive-definite if for each member $d$ of the support of $\probP$, which is the distribution of $\dataset$, there are at least as many eigenvectors of $d$ as the number of dimension of $\orthoM(\dataset;h_{\theta},(s_0,g_0))$, which is true in our setting as the co-domain of $\orthoM(\dataset;h_{\theta},(s_0,g_0))$ is $\mathbb{R}$.

Therefore, all conditions for Theorem 3.3~\cite{Chernozhukov2018Double/debiasedParameters} to hold are satisfied, which concludes the proof.
\end{proof}

\subsection{Suboptimaltiy}

\begin{proof}[Proof of\textbf{~\cref{thm:subopt}}:]\label{appen:subopt}
From theorem~\ref{thm:dml}, we have that the parameters $\hat{\theta}$ for $h_{\hat{\theta}}$ learned from a dataset of size $N$ using DML-IV satisfy $(\hat{\theta}-\theta_0)\xrightarrow{d}\mathcal{N}(0,\sigma^2/N)$, where $\sigma^2$ is the is the DML-IV estimator variance. This means that, for all $\epsilon>0$ and $\zeta>0$, there exists an integer $K>0$ such that for all $N\geq K$,
\begin{align*}
\probP(\norm{\hat{\theta}-\theta_0}>\epsilon)\leq 1-\Phi\left(\epsilon\cdot \sqrt{N}/\sigma\right)+\zeta/2,
\end{align*}
where $\Phi$ is the CDF of a standard Gaussian distribution.
If we assume $L$ to be a constant such that $\abs{h_\theta(C,A)-h_{\theta^\prime}(C,A)}\leq L \norm{\theta-\theta^\prime}$ for all $C,A\in \textrm{supp}^M(C,A)$ and $\theta\in\Theta$, we have that for all $\epsilon>0$ and $\zeta>0$, there exists an integer $K>0$ such that for all $N\geq K$, 
\begin{equation}
\probP(\abs{h_{\hat{\theta}}(C,A)-h_{\theta_0}(C,A)}>L\cdot\epsilon) \leq 1-\Phi(\epsilon\cdot \sqrt{N}/\sigma)+\zeta/2 \quad \forall C,A\in \textrm{supp}^M(C,A).\label{eq:bound_h}
\end{equation}
Next, we can show that the suboptimality of $\hat{\policy}$ satisfies
\begin{align}
\textrm{subopt}(\hat{\policy})
&=\Value(\policy^*)-\Value(\hat{\policy})\nonumber \\
&=\expectE_{C\sim\testD}[R\mid C,do(A=\policy^*(c))]-\expectE_{C\sim\testD}[R\mid C,do(A=\hat{\policy}(c))]\nonumber \\
&=\expectE_{C\sim\testD}[f_r(C,\policy^*(C))-f_r(C,\hat{\policy}(C))]\nonumber \\
&=\expectE_{C\sim\testD}[h(C,\policy^*(C))-h(C,\hat{\policy}(C))]\nonumber \\
&\leq \max_{c\in \textrm{supp}(\testD)} \left(h(c,\policy^*(c))-h(c,\hat{\policy}(c))\right)\nonumber \\
&\leq \max_{c\in \textrm{supp}(\testD)}\abs{h(c,\policy^*(c))-h_{\hat{\theta}}(c,\policy^*(c))}+(h_{\hat{\theta}}(c,\policy^*(c))-h_{\hat{\theta}}(c,\hat{\policy}(c)))\nonumber\\
&\quad\quad+\abs{h_{\hat{\theta}}(c,\hat{\policy}(c))-h(c,\hat{\policy}(c))}\nonumber \\
&\leq 2L\cdot\epsilon \quad\text{ with probability } \left(\Phi(\epsilon\cdot \sqrt{N}/\sigma)-\zeta/2\right)\label{eq:bound_wp}
\end{align}
where $\textrm{supp}(\testD)$ is the support of $\testD$, by Equation~\ref{eq:bound_h} and the fact that $h_{\hat{\theta}}(C,\policy^*(C))-h_{\hat{\theta}}(C,\hat{\policy}(C))\leq0$. Setting $\Phi(\epsilon\cdot \sqrt{N}/\sigma)=1-\zeta/2$ in Equation~\ref{eq:bound_wp} and substituting $\epsilon$ yields
\begin{equation*}
\textrm{subopt}(\hat{\policy})\leq 2L\Phi^{-1}(1-\zeta/2)\sigma/\sqrt{N}\quad\text{ with probability }1-\zeta.
\end{equation*}
From Blair \textit{et al.}'s approximation for the inverse of the error function (erf)~\cite{Blair1976RationalFunction}, we have that for all $y\in(0,1]$, $\Phi^{-1}(1-y)\leq\sqrt{-2\ln(y)}$. Thus, we conclude that there exists $K>0$ such that for all $N>K$
\begin{equation*}
\textrm{subopt}(\hat{\policy}_N)\leq 2\sqrt{2}L\sigma\sqrt{\frac{\ln(2/\zeta)}{N}}\quad\text{ with probability }1-\zeta,
\end{equation*}
which completes the proof.
\end{proof}

\subsection{Critical Radius and Effective Dimension} \label{appen:critical_radius}

\begin{definition}[\citet{Wainwright2019}]
    The critical radius denoted by $\delta_N$ is defined as the minimum $\delta$ that satisfies the following upper bound on the local Gaussian complexity of a star-shaped function class $\mathcal{F}^*$\footnote{A function class 
  $\mathcal{F}$ is star-shaped if for every $f \in \mathcal{F}$ and $\alpha \in [0, 1]$, we have $\alpha f \in \mathcal{F}$.}, $\mathcal{G(\mathcal{F}^*, \delta)} \leq {\delta^2}/2$, where local Gaussian complexity is defined as
    \begin{align*}
        \mathcal{G(\mathcal{F}^*, \delta)} = \expectE_{\epsilon}[\sup_{g \in \mathcal{F}^*: \norm{g}_N \leq \delta}  \langle \epsilon, g \rangle ],
    \end{align*}
    with $\epsilon$ being a random i.i.d. zero-mean Gaussian vector.
\end{definition}
The critical radius is a standard notion to bound the estimation error in the regression problem. Since local Gaussian complexity can be viewed as an expected value of a supremum of a stochastic process indexed by $g$, we can apply empirical process theory tools, namely the Dudley's entropy integral~\citep{Wainwright2019,van2014probability}, to provide a bound on the critical radius,
\begin{align*}
    \mathcal{G(\mathcal{F}^*, \delta)} \leq \inf_{\alpha \geq 0} \left \{\alpha + \frac{1}{\sqrt{N}}  \int_{\alpha/4}^{\delta} \sqrt{\log \mathcal{N}(\mathcal{F}^*, L^2(P_N), \epsilon)}\:d\epsilon\right \},
\end{align*}
where $\mathcal{N}(\mathcal{F}^*, L^2(P_N), \epsilon)$ is the $\epsilon$-covering number of function class $\mathcal{F}^*$ in $L^2(P_N)$ norm. Now, by placing $\alpha = 0$, when the integral is a single scale value of $ \sqrt{\log \mathcal{N}(\mathcal{F}^*, L^2(P_n), \epsilon)}$, we infer that
\begin{align*}
    \mathcal{G(\mathcal{F}^*, \delta)} \leq \frac{\delta}{\sqrt{N}} \sqrt{\log \mathcal{N}(\mathcal{F}^*, L^2(P_N), \epsilon)}.
\end{align*}
Thus, the critical radius will be upper bounded by
\begin{align*}
    \delta_N \lesssim \frac{\sqrt{\log \mathcal{N}(\mathcal{F}^*, L^2(P_N), \epsilon)}}{\sqrt{N}} = O(d_N N^{-1/2}).
\end{align*}
\citet{Chernozhukov2022RieszNetForests,Chernozhukov2021AutomaticRegression} referred to $d_N = \sqrt{\log \mathcal{N}(\mathcal{F}^*, L^2(P_N), \epsilon)}$ as the effective dimension of the hypothesis space. Note that this matches the minimax lower bound of fixed design estimation for this setting~\citep{yang1999information}.
\section{Datasets Details} 
In this section, we provide details of the datasets considered in this paper.

\subsection{Aeroplane Ticket Demand Dataset}\label{appen:demand}

Here, we describe the aeroplane ticket demand dataset, first introduced by~\citet{Hartford2017DeepPrediction}. The observable variables are generated by the following model:
\begin{align*}
r&=h_0((t,s),p)+\epsilon, \quad \expectE[\epsilon\lvert t,s,p]=0;\\
p&=25+(z+3)\psi(t)+\omega,
\end{align*}
where $r$ is the ticket sales (as the outcome variable) and $p$ is the ticket price (as the action variable). $(t,s)$ are observed context variables, where $t$ is the time of year and $s$ is the customer type. The fuel price $z$ is introduced as an instrumental variable, which only affects the ticket price $p$. The noises $\epsilon$ and $\omega$ are correlated with correlation $\rho\in[0,1]$, where in our experiments we set $\rho=0.9$. $h_0$ is the true counterfactual prediction function, defined as
\begin{align*}
    h_0((t,s),p)&=100+(10+p)\cdot s \cdot \psi(t)-2p,\\
    \psi(t)&=2\left(\frac{(t-5)^4}{600}+\exp(-4(t-5)^2)+\frac{t}{10}-2\right),
\end{align*}
where $\psi(t)$ is a complex non-linear function of $t$ plotted in~\cref{fig:nonlinear}. The offline dataset is sampled with the following distributions:
\begin{align*}
    s&\sim \text{Unif}\{1,...,7\}\\
    t&\sim \text{Unif}(0,10)\\
    z&\sim \mathcal{N}(0,1)\\
    \omega&\sim \mathcal{N}(0,1)\\
    \epsilon&\sim \mathcal{N}(\rho\omega,1-\rho^2).
\end{align*}
From the observations $(r,p,t,s,z)$, we estimate $\hat{h}$ using IV regression methods, and the mean squared error between $\hat{h}$ and the true causal function $h_0$ are computed on 10000 random samples from the above model. For the out of distribution test samples, we sample $t\sim \text{Unif}(1,11)$ instead.

We standardise the action and outcome variables $p$ and $r$ to centre the data around a mean of zero and a standard deviation of one following~\citet{Hartford2017DeepPrediction}. This is standard practice for DNN training, which improves training stability and optimization efficiency.

\begin{figure}[tb]
    \centering
\includegraphics[width=0.5\textwidth]{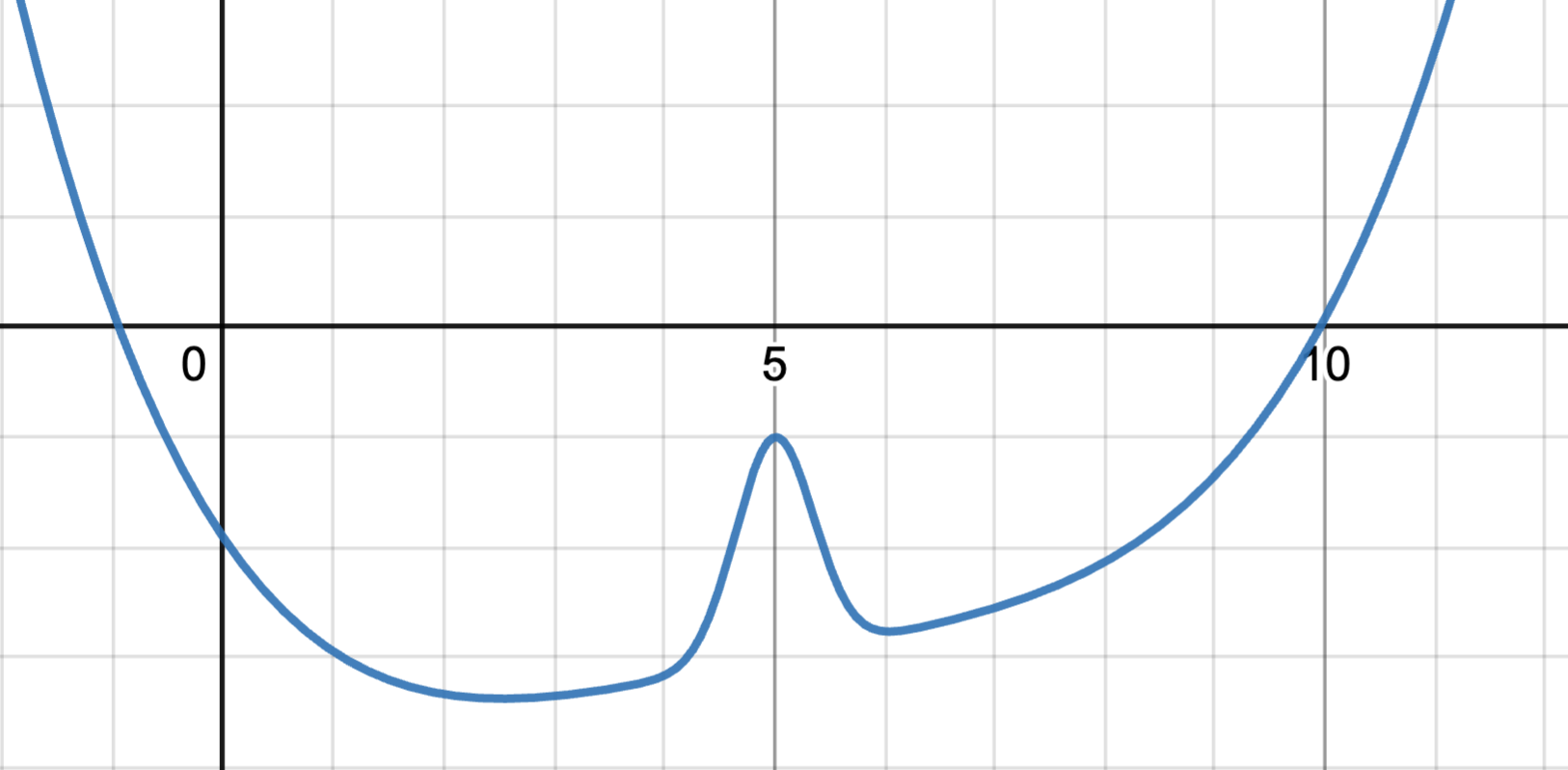}
    \caption{A graph of the nonlinear function $\psi(t)$ in the aeroplane ticket demand dataset.}
    \label{fig:nonlinear}
\end{figure}

\noindent\textbf{High-Dimensional Setting}

For the high-dimensional setting, we again follow~\citet{Hartford2017DeepPrediction} to replace the customer type $s\in[7]$ in the low-dimensional setting with images of the corresponding handwritten digits from the MNIST dataset~\cite{LeCun2010}. For each digit $d\in[7]$, we select a random MNIST image from the digit class $d$ as the new customer type variable $s$. The images are $28\times28=784$ dimensional.

\subsection{Real-World Datasets}\label{appen:real}

Following previously studied causal inference methods~\cite{Shalit2017,Wu2023,Schwab2019,Bica2020}, we consider two semi-synthetic real-world datasets IHDP\footnote{IHDP: \url{https://www.fredjo.com/}.}~\cite{Hill2011} and PM-CMR\footnote{PM-CMR:\url{https://doi.org/10.23719/1506014}.}~\cite{Wyatt2020} for experiments, since the true counterfactual prediction function is rarely available for real-world datasets. 

IHDP, the Infant
Health and Development Program (IHDP), comprises 747
units with 6 pre-treatment continuous variables, one action variable and 19 discrete variables related to the children and their mothers, aiming at evaluating the effect of specialist home visits on the future cognitive test scores of premature infants. From the original data, We select all 6 continuous covariance variables as our context variable $C$.

PM-CMR studies the
impact of PM2.5 particle level on the cardiovascular mortality rate (CMR) in 2132 counties in the United States using data provided by the National Studies on Air Pollution and Health~\cite{Wyatt2020}. We use 6 continuous variables
about CMR in each city as our context variable $C$.

Following~\citet{Wu2023}, from the context variables $C$ obtained from real-world datasets, we generate the instrument $Z$, the action $A$ and the outcome $R$ using the following model:
\begin{align*}
&Z\sim\probP(Z=z)=1/K,\quad z\in[1..K];\\
&A=\sum_{z=1}^K 1_{Z=z} \sum_{i=1}^{d_C}w_{iz}(C_i+0.2\epsilon+f_z(z))+\delta_A, \quad w_{iz}\sim\text{Unif}(-1,1);\\
&R=9A^2-1.5A+\sum_{i=1}^{d_C} \frac{C_i}{d_C}+\abs{C_1 C_2}-\sin{(10+C_2 C_3)}+2\epsilon+\delta_R,
\end{align*}
where $C_i$ denotes the $i$-th variable in $C$, $f_z$ is a function that returns different constants depending on the input $z$, $\delta_R,\delta_A\sim\mathcal{N}(0,1)$ and $\epsilon\sim\mathcal{N}(0,0.1)$ is the unobserved confounder. The fully generated semi-synthetic datasets IHDP and PM-CMR have 747 and 2132 samples respectively, and we randomly split them into training (63\%), validation (27\%), and
testing (10\%) following~\citet{Wu2023}.

\section{Failure of Standard Offline Bandit Algorithms}\label{appen:offline_bandit}

\begin{figure}[t]
\centering
\includegraphics[width=0.7\textwidth]{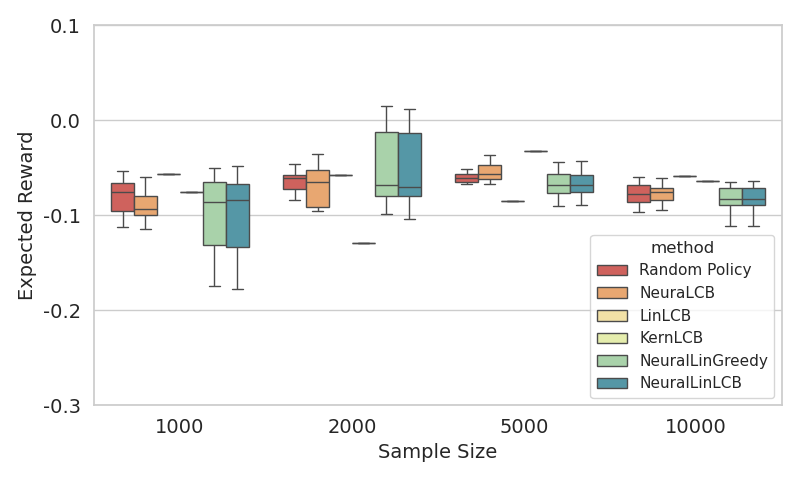}
\caption{Comparing the average reward obtained by policies learned using offline bandit algorithms that do not take IVs into account with a random policy on the aeroplane ticket demand dataset with low-dimensional context.}
\label{fig:offline_bandit}
\end{figure}

It has been demonstrated that standard supervised learning that does not 
take IVs into account fails to learn the causal function or the counterfactual prediction function from a confounded offline dataset~\cite{Hartford2017DeepPrediction}. Similarly, we demonstrate here that standard offline bandit algorithms also fail to learn meaningful policies from confounded offline datasets. We evaluate PEVI, also called LinLCB~\cite{Jin2021}, NeuraLCB~\cite{Nguyen-Tang2022}, KernLCB~\cite{Valko2013}, NeuralLinLCB~\cite{Nguyen-Tang2022} and NeuralLinGreedy~\cite{Nguyen-Tang2022} algorithms, for which we combine the context $C$ and instrument $Z$ variables together as the new context input for these offline bandit algorithms. For algorithms that only support discrete actions, we discretise the action space $\mathcal{A}$ into 20 discrete actions.

For all methods, we follow the network architecture and hyper parameters from the original papers, and we adopt the implementation\footnote{\url{https://github.com/thanhnguyentang/offline_neural_bandits}} of~\citet{Nguyen-Tang2022}. We evaluate these methods on the aeroplane ticket demand dataset described in~\cref{appen:demand} and compare the average reward obtained by the learned policies with a random policy in~\cref{fig:offline_bandit}. It can be seen that all the offline bandit algorithms do not outperform a random policy while DML-IV achieves an average reward higher then 1 as shown in~\cref{fig:lowd_r}. This is unsurprising because these bandit methods do not exploit IVs explicitly and are unable to learn the true causal effect of actions.

\section{Network Structures and Hyper-Parameters}\label{appen:networks}

Here, we describe the network architecture and hyper-parameters of all experiments. Unless otherwise specified, all neural network algorithms are optimised using AdamW~\cite{Loshchilov2017} with learning rate $= 0.001$, $\beta =(0.9,0.999)$ and $\epsilon=10^{-8}$. In addition, we set $K=10$ for $K$-fold cross-fitting in DML-IV.

\subsection{Aeroplane Ticket Demand Dataset}
For DML-IV and CE-DML-IV, we use the network architecture described in~\cref{tab:demand_arch}. We use a learning rate of $0.0002$ with a weight decay of $0.001$ (L2 regularisation) and a dropout rate of $\frac{1000}{5000+N}$ that depends on the data size $N$. For DeepGMM, we use the same structure as the outcome network of DML-IV with dropout $=0.1$ and the same learning rate as DML-IV. For DFIV, we follow the original structure proposed in~\citet{Xu2020} with regularisers $\lambda 1$, $\lambda 2$ both set to 0.1 and weight decay of 0.001. For DeepIV, we use the same network architectures as action network and stage 2 network for DML-IV, with the dropout rate in~\citet{Hartford2017DeepPrediction} and weight decay of 0.001. For KIV, we use the Gaussian kernel, where the bandwidth is determined by the median trick as originally described by~\citet{Singh2019}, and we use the random Fourier feature trick with 100 dimensions.

\begin{table}[t]
    \caption{Network architecture for DML-IV and CE-DML-IV for the aeroplane ticket demand low-dimensional dataset. For the input layer, we provide the input variables. For mixture of Gaussians output, we report the number of components. The dropout rate is given in the main text.}
    \centering
    \subfloat[Action Network for $\hat{g}$]{
    \begin{tabular}{||c|c||}
    \hline
    \textbf{Layer Type} & \textbf{Configuration}  \\ [0.5ex]
    \hline \hline
    Input & $C,Z$\\      \hline
    FC + ReLU & in:3 out:128\\    \hline
    Dropout & - \\    \hline
    FC + ReLU & in:128 out:64\\    \hline
    Dropout & - \\    \hline
    FC + ReLU & in:64 out:32\\    \hline
    Dropout & - \\    \hline
    MixtureGaussian & 10\\    \hline

    \end{tabular}\label{tab:action_arch}}
    \hspace{30pt}
    \subfloat[Outcome Network for $\hat{s}$]{
    \begin{tabular}{||c|c||}
    \hline
    \textbf{Layer Type} & \textbf{Configuration}  \\ [0.5ex]
    \hline \hline
    Input & $C,Z$\\      \hline
    FC + ReLU & in:3 out:128\\    \hline
    Dropout & - \\    \hline
    FC + ReLU & in:128 out:64\\    \hline
    Dropout & - \\    \hline
    FC + ReLU & in:64 out:32\\    \hline
    Dropout & - \\    \hline
    FC & in:32 out:1\\    \hline
    \end{tabular}\label{tab:outcome_arch}}

    \vspace{10pt}
    \subfloat[Stage 2 Network for $\hat{h}$]{
    \begin{tabular}{||c|c||}
    \hline
    \textbf{Layer Type} & \textbf{Configuration}  \\ [0.5ex]
    \hline \hline
    Input & $C,A$\\      \hline
    FC + ReLU & in:3 out:128\\    \hline
    Dropout & - \\    \hline
    FC + ReLU & in:128 out:64\\    \hline
    Dropout & - \\    \hline
    FC + ReLU & in:64 out:32\\    \hline
    Dropout & - \\    \hline
    FC & in:32 out:1\\    \hline
    \end{tabular}\label{tab:stage2_arch}}

    \label{tab:demand_arch}
\end{table}

\subsection{Aeroplane Ticket Demand with MNIST}

For DML-IV and CE-DML-IV, we use a convolutional neural network (CNN) feature extractor, which we denote as \textit{ImageFeature}, described in~\cref{tab:mnist_arch}, for all networks. The full network architecture is described in~\cref{tab:mnist_demand_arch}; we use weight decay of 0.05. For DeepGMM, we use the same structure as the outcome network of DML-IV, with a dropout rate of 0.1 and weight decay of 0.05. For DFIV, we follow the original structure proposed in~\citet{Xu2020} with regularisers $\lambda 1$, $\lambda 2$ both set to 0.1 and weight decay of 0.05. For DeepIV, we use the same network architecture as the action network and stage 2 network for DML-IV, with the dropout rate in~\citet{Hartford2017DeepPrediction} and weight decay of 0.05. For KIV, we use the Gaussian kernel, where the bandwidth is determined by the median trick as originally described by~\citet{Singh2019}, and we use the random Fourier feature trick with 100 dimensions.

\begin{table}[t]
\caption{Network architecture of the feature extractor used for the aeroplane ticket demand dataset with MNIST. For each convolution
layer, we list the kernel size, input dimension and output dimension, where s stands for stride and p stands for padding. For max-pooling, we provide the
size of the kernel. The dropout rate here is set to 0.3. We denote this feature extractor as \textit{ImageFeature}.}
    \centering
    \begin{tabular}{||c|c||}
    \hline
    \textbf{Layer Type} & \textbf{Configuration}  \\ [0.5ex]
    \hline \hline
    Input & $28\times 28$\\      \hline
    Conv + ReLU & $3\times3\times32$, s:1, p:0\\    \hline
    Max Pooling & $2\times2$, s:2\\    \hline
        Dropout & -\\    \hline
    Conv + ReLU & $3\times3\times64$, s:1, p:0\\    \hline
    Max Pooling & $2\times2$, s:2\\    \hline
        Dropout & -\\    \hline
    Conv + ReLU & $3\times3\times64$, s:1, p:0\\    \hline
    Dropout & -\\    \hline
    FC + ReLU & in: 576, out:64\\    \hline
    \end{tabular}

\label{tab:mnist_arch}
\end{table}

\begin{table}[t]
    \caption{Network architecture for DML-IV and CE-DML-IV for the aeroplane ticket demand dataset with MNIST. For the input layer, we provide the input variables. For a mixture of Gaussians output, we report the number of components. The dropout rate is given in the main text.}
    \centering
    \subfloat[Action Network for $\hat{g}$]{
    \begin{tabular}{||c|c||}
    \hline
    \textbf{Layer Type} & \textbf{Configuration}  \\ [0.5ex]
    \hline \hline
    Input & ImageFeature$(C),Z$\\      \hline
    FC + ReLU & in:66 out:32\\    \hline
    Dropout & - \\    \hline
    MixtureGaussian & 10\\    \hline
    \end{tabular}}
    \hspace{30pt}
    \subfloat[Outcome Network for $\hat{s}$]{
    \begin{tabular}{||c|c||}
    \hline
    \textbf{Layer Type} & \textbf{Configuration}  \\ [0.5ex]
    \hline \hline
    Input & ImageFeature$(C),Z$\\      \hline
    FC + ReLU & in:66 out:32\\    \hline
    Dropout & - \\    \hline
    FC & in:32 out:1\\    \hline
    \end{tabular}}

    \vspace{10pt}
    \subfloat[Stage 2 Network for $\hat{h}$]{
    \begin{tabular}{||c|c||}
    \hline
    \textbf{Layer Type} & \textbf{Configuration}  \\ [0.5ex]
    \hline \hline
    Input & ImageFeature$(C),A$\\      \hline
    FC + ReLU & in:66 out:32\\    \hline
    Dropout & - \\    \hline
    FC & in:32 out:1\\    \hline
    \end{tabular}}
    \label{tab:mnist_demand_arch}
\end{table}

\subsection{IHDP and PM-CMR}
For the two real-world datasets, we use the same network architectures described in~\cref{tab:demand_arch} as in the low-dimensional ticket demand setting, where the input dimension is increased to 7 for all networks. We use a dropout rate of 0.1 and weight decay of 0.001. For DeepGMM, we use the same structure as the outcome network of DML-IV with dropout $=0.1$. For DFIV, we also use the same network architectures as in the low dimensional ticket demand setting with regularisers $\lambda 1$, $\lambda 2$ both set to 0.1 and weight decay of 0.001. For DeepIV, we use the same network architectures as the action network and stage 2 network of DML-IV, with a dropout rate of 0.1 and weight decay of 0.001. For KIV, we use the Gaussian kernel where the bandwidth is determined by the median trick as originally described by~\citet{Singh2019}, and we use the random Fourier feature trick with 100 dimensions.

\subsection{Valiadation and Hyper-Parameter Tuning}\label{appen:tune}

Validation procedures are crucial for tuning DNN hyper-parameters and optimizer parameters. All the DML-IV and CE-DML-IV training stages can be validated by simply evaluating the respective losses on held-out data, as discussed in~\citet{Hartford2017DeepPrediction}. This allows independent validation and hyperparameter tuning of the two first stage networks (the action and the outcome networks), and perform second stage validation using the best network selected in the first stage. This validation procedure guards against the ‘weak instruments’ bias~\cite{Bound1995} that can occur when the instruments are only weakly correlated with the actions variable (see detailed discussion in~\citet{Hartford2017DeepPrediction}).

\section{Additional Experimental Results}

In this section, we provide additional experimental results including the effects of weak IVs, performance with tree-based estimators, and a hyperparameter sensitivity analysis.

\subsection{Effects of Weak Instruments}\label{appen:weak_iv}

When the correlation between instruments and the endogenous variable (the action in our case) is weak, IV regression methods generally become unreliable~\cite{Andrews2019} because the weak correlation induces variance and bias in the first stage estimator thus induces bias in the second stage estimator, especially for non-linear IV regressions. In theory, DML-IV should be more resistant to biases in the first stage thanks to the DML framework, as long as the causal effect is identifiable under the weak instrument. Under this identifiability condition, Lemma 3.3, Theorem 3.4 and 3.5 all hold, and the convergence rate guarantees still apply. However, while causal identifiability with weak instruments are studied theoretically in the linear setting~\cite{Andrews2019}, such a theoretical study for non-linear IV models, to the best of our knowledge, does not exist due to the difficulty of analyzing non-linear models and estimators.

Experimentally, for the airplane ticket demand dataset, we alter the instrument strength by changing how much the instrument z affects the price p. Recall from~\cref{appen:demand} that $p=25+(z+3)\psi(t)+\omega$, where $\psi$ is a nonlinear function and $\omega$ is the noise. We add an IV strength parameter $\varrho$ such that $p=25+(\varrho\cdot z+3)\psi(t)+\omega$. In~\cref{tab:weak_iv}, we present the mean and standard deviation of the MSE of $\hat{h}$ for various IV strengths $\varrho$ from 0.01 to 1 and sample size $N=5000$. It is very interesting to see that DML-IV indeed performs significantly better than SOTA nonlinear IV regression methods under weak instruments.

\begin{table}[t]\setlength\extrarowheight{4pt}
\centering
\tiny
\begin{tabular}{||c|c|c|c|c|c|c||}
\hline
\textbf{IV Strength} & \textbf{1.0} & \textbf{0.8}                     & \textbf{0.6}                     & \textbf{0.4}                     & \textbf{0.2}                     & \textbf{0.01}                    \\\hline \hline
DML-IV              & \textbf{0.0676(0.0116)} & \textbf{0.0984(0.0161)} & \textbf{0.1295(0.0168)} & \textbf{0.1859(0.0376)} & \textbf{0.2899(0.0494)} & \textbf{0.4872(0.1295)} \\\hline
CE-DML-IV           & \textbf{0.0765(0.0119)} & \textbf{0.1064(0.0120)} & \textbf{0.1514(0.0203)} & \textbf{0.2070(0.0329)} & \textbf{0.3194(0.0572)} & \textbf{0.5302(0.1625)} \\\hline
DeepIV              & 0.1213(0.0209)          & 0.2039(0.0269)         & 0.3051(0.0415)          & 0.4476(0.0656)          & 0.6891(0.1210)          & 0.9293(0.2382)          \\\hline
DFIV                & 0.1124(0.0481)          & 0.1586(0.0320)          & 0.3080(0.1907)          & 0.8117(0.2779)          & 0.9622(0.3892)          & 1.6503(0.6845)          \\\hline
DeepGMM             & 0.2699(0.0522)          & 0.3330(0.1171)          & 0.4762(0.1056)          & 0.8666(0.2248)          & 1.0056(0.4334)          & 2.0218(0.6555)          \\\hline
KIV                 & 0.2312(0.0272)          & 0.3149(0.0218)          & 0.4275(0.0368)          & 0.6646(0.0538)          & 0.8099(0.0657)          & 1.226(0.1014)\\\hline
\end{tabular}
\caption{Results for the low-dimensional ticket demand dataset when the IV is weakly correlated with the action.}
\label{tab:weak_iv}
\end{table}

\subsection{Performance of DML-IV with tree-based estimators}\label{appen:tree-based}

The DML-IV framework allows for general estimators following the Neyman orthogonal score function. While deep learning is flexible and widely used in SOTA non-linear IV regression methods, Gradient Boosting and Random Forests regression are all good candidate estimators for DML-IV. In addition, as discussed in Lemma 3.3, the convergence rate and suboptimality guarantees in Theorem 3.4 and 3.5 both hold for these tree-based regressions.

Empirically, we replace the DNN estimators in DML-IV, CE-DML-IV and DeepIV with Random Forests and Gradient Boosting regressors (using scikit-learn implementation). DeepIV is a good baseline for comparison, since it optimizes directly using a non-Neyman-orthogonal score and allows for direct replacement of all DNN estimators with tree-based estimators. We use 500 trees for both regressors, with minimum samples required at each leaf node of 100 for the nuisance parameters and 10 for $\hat{h}$.

In~\cref{tab:tree_based}, we present the mean and standard deviation of the MSE of $\hat{h}$ with Random Forests and Gradient Boosting estimators on the aeroplane ticket demand dataset with various dataset sample sizes. The results demonstrate the benefits of our Neyman orthogonal score function, and interestingly the performance of Gradient Boosting is comparable to DNN estimators.

\begin{table}[ht]\setlength\extrarowheight{4pt}
\centering
\scriptsize
\begin{tabular}{||c|c|c|c|c||}
\hline
\textbf{IV Strength} & \textbf{Dataset Size} & \textbf{DNN (results in the paper)} & \textbf{Random Forests} & \textbf{Gradient Boosting} \\\hline \hline
DML-IV    & 2000                  & \textbf{0.1308(0.0206)}             & 0.1689(0.0172)          & \textbf{0.1301(0.0112)}    \\\hline
CE-DML-IV &    2000               & \textbf{0.1410(0.0246)}             & 0.1733(0.0198)          & \textbf{0.1329(0.0125)}    \\\hline
DeepIV    &   2000               & 0.2388(0.0438)                      & 0.2642(0.0261)          & 0.2052(0.0232)             \\\hline
DML-IV    & 5000                  & \textbf{0.0676(0.0129)}             & 0.1067(0.0131)          & \textbf{0.0632(0.0107)}    \\\hline
CE-DML-IV &   5000               & \textbf{0.0765(0.0119)}             & 0.1154(0.0138)          & \textbf{0.0699(0.0069)}    \\\hline
DeepIV    &    5000              & 0.1213(0.0209)                      & 0.1626(0.0128)          & 0.1020(0.0091)             \\\hline
DML-IV    & 10000                 & \textbf{0.0378(0.0094)}             & 0.0657(0.0062)          & \textbf{0.0482(0.0079)}    \\\hline
CE-DML-IV &  10000                & \textbf{0.0442(0.0070)}             & 0.0721(0.0039)          & \textbf{0.0523(0.0059)}    \\\hline
DeepIV    &  10000             & 0.0714(0.0140)                      & 0.1106(0.0080)          & 0.1017(0.0075)\\\hline
\end{tabular}
\caption{Results for the low-dimensional ticket demand dataset using tree-based estimators compared to DNN estimators.}
\label{tab:tree_based}
\end{table}

\subsection{Sensitivity analysis for different Hyperparameters}\label{appen:sensitivity}

The tunable hyperparameters in DML-IV are the learning rate, network width, weight decay and dropout rate (see~\cref{appen:networks}). As a sensitivity analysis, we provide results for the mean and standard deviation of the MSE of the DML-IV estimator $\hat{h}$ with different hyperparameter values for both the low-dimensional and high-dimensional datasets with sample size N=5000 in~\cref{tab:ablation_low} and~\cref{tab:ablation_high}. Overall, we see that DML-IV is not very sensitive to small changes of the hyperparameters.

\begin{table}[t]\setlength\extrarowheight{4pt}
    \centering
    \scriptsize
    \begin{tabular}{||c|c|c|c|c|c||}
    \hline
     \textbf{Learning Rate} & \textbf{Weight Decay}& \textbf{Dropout}& \textbf{DNN Width} & \textbf{DML-IV} &\textbf{CE-DML-IV} \\\hline \hline
0.0002 & 0.001  & 0.1  & 128 & \textbf{0.0676(0.0129)} & \textbf{0.0765(0.0119)} \\\hline
0.0005 &        &      &     & 0.0752(0.0122)          & 0.0897(0.0196)          \\\hline
0.0001 &        &      &     & \textbf{0.0703(0.0195)} & \textbf{0.0794(0.0201)} \\\hline
       & 0.0005 &      &     & 0.0794(0.0185)          & 0.0823(0.0149)          \\\hline
       & 0.005  &      &     & 0.0765(0.0135)          & 0.0809(0.0159)          \\\hline
       & 0.01   &      &     & 0.0820(0.0162)          & 0.0865(0.0174)          \\\hline
       &        & 0.05 &     & \textbf{0.0715(0.0074)} & \textbf{0.0813(0.0089)} \\\hline
       &        & 0.2  &     & 0.0836(0.0100)          & 0.0919(0.0157)          \\\hline
       &        &      & 64  & 0.0830(0.0162)          & 0.0924(0.0121)          \\\hline
       &        &      & 256 & 0.0943(0.0179)          & 0.0981(0.0126)          \\\hline
       & 0.0005 & 0.2  &     & 0.0805(0.0133)          & 0.0910(0.0106)          \\\hline
       & 0.005  & 0.05 &     & \textbf{0.0672(0.0116)} & \textbf{0.0742(0.0102)} \\\hline
       & 0.01   & 0.05 &     & 0.0825(0.0152)          & 0.0914(0.0125)          \\\hline
       &        & 0.2  & 256 & 0.0810(0.0129)          & 0.0852(0.0121)          \\\hline
       &        & 0.05 & 64  & 0.0907(0.0149)          & 0.0963(0.0161)          \\\hline
       & 0.005  &      & 256 & 0.0939(0.0146)          & 0.0991(0.0093)\\\hline
    \end{tabular}
    \caption{Results for the low-dimensional ticket demand dataset for a range of hyperparameter values. The default hyperparameters in this case are: learning rate=0.0002, weight decay=0.001, dropout=0.1 and DNN width 128.}
    \label{tab:ablation_low}
\end{table}
\begin{table}[ht]\setlength\extrarowheight{4pt}
    \centering
     \scriptsize
    \begin{tabular}{||c|c|c|c|c|c||}
    \hline
\textbf{Learning Rate} & \textbf{Weight Decay} & \textbf{Dropout} & \textbf{CNN Channels} & \textbf{DML-IV}          & \textbf{CE-DML-IV}      \\\hline \hline
0.001                  & 0.05                  & 0.2              & 64                    & \textbf{0.3513(0.0125)}  & \textbf{0.3808(0.0150)} \\\hline
0.0005                 &                       &                  &                       & 0.4063(0.0129)           & 0.5008(0.0369)          \\\hline
0.002                  &                       &                  &                       & 0.3659(0.0219)           & 0.4133(0.0267)          \\\hline
0.005                  &                       &                  &                       & \textbf{0.3377(0.0218)}  & \textbf{0.3555(0.0202)} \\\hline
& 0.01                  &                  &                       & 0.3935(0.0176)           & 0.4461(0.0478)          \\\hline
& 0.02                  &                  &                       & \textbf{0.3595(0.03013)} & \textbf{0.3851(0.0293)} \\\hline
& 0.1                   &                  &                       & 0.4066(0.0172)           & 0.5160(0.0329)          \\\hline
&                       & 0.1              &                       & 0.4136(0.0211)           & 0.5386(0.0398)          \\\hline
&                       & 0.3              &                       & 0.3857(0.0171)           & 0.4002(0.0249)          \\\hline
&                       &                  & 128                   & 0.4176(0.01941)          & 0.5129(0.0630)          \\\hline
&                       &                  & 256                   & 0.4942(0.0226)           & 0.6180(0.0396)          \\\hline
& 0.1                   & 0.1              &                       & 0.4163(0.0214)           & 0.5952(0.0343)          \\\hline
& 0.01                  & 0.3              &                       & 0.3636(0.0186)           & 0.3995(0.0250)          \\\hline
&                       & 0.3              & 128                   & 0.4006(0.0187)           & 0.4764(0.0216)          \\\hline
&                       & 0.3              & 256                   & \textbf{0.3429(0.0215)}  & \textbf{0.3971(0.0264)} \\\hline
& 0.1                   &                  & 256                   & 0.4170(0.0283)           & 0.5335(0.0371)\\\hline
                       
\end{tabular}
\caption{Results for the high-dimensional ticket demand dataset for a range of hyperparameter values. The default hyperparameters in this case are: learning rate 0.001, weight decay=0.05, dropout=0.2 and 64 CNN channels.}
\label{tab:ablation_high}
\end{table}

\end{document}